\newtheorem{theorem}{Theorem}
\newtheorem{assumption}{Assumption}
\newtheorem{lemma}[theorem]{Lemma}
\newtheorem{remark}{Remark}
\newtheorem{example}{Example}
\newcommand\norm[1]{\left\lVert#1\right\rVert}
\DeclareMathOperator*{\argmin}{arg\,min}
\newcommand{\s}{{\mathcal S}}
\newcommand{\A}{{\mathcal A}}
\newcommand{\C}{{\mathcal C}}
\newcommand{\M}{{\mathcal M}}
\newcommand{\probset}{\mathcal{P}}
\newcommand{\kfunc}{\kappa}
\newcommand{\aaaiparam}{\rho}
\title{Meta Reinforcement Learning with Finite Training Tasks - a Density Estimation Approach}
\author{%
  Zohar Rimon\\
  Technion - Israel Institute of Technology\\
  \texttt{zohar.rimon@campus.technion.ac.il} \\
   \And
   Aviv Tamar \\
   Technion - Israel Institute of Technology\\
   \texttt{avivt@technion.ac.il}
   \AND
   Gilad Adler \\
   Ford Research Center Israel \\
   \texttt{gadler3@ford.com} \\
}
\begin{document}

\maketitle

\begin{abstract}
  In meta reinforcement learning (meta RL), an agent learns from a set of training tasks how to quickly solve a new task, drawn from the same task distribution. The optimal meta RL policy, a.k.a.~the Bayes-optimal behavior, is well defined, and guarantees optimal reward \textit{in expectation}, taken with respect to the task distribution.
  The question we explore in this work is \textit{how many} training tasks are required to guarantee approximately optimal behavior with high probability. Recent work provided the first such PAC analysis for a model-free setting, where a history-dependent policy was learned from the training tasks. In this work, we propose a different approach: directly learn the task distribution, using density estimation techniques, and then train a policy on the learned task distribution. 
  We show that our approach leads to bounds that depend on the dimension of the task distribution. In particular, in settings where the task distribution lies in a low-dimensional manifold, we extend our analysis to use dimensionality reduction techniques and account for such structure, obtaining significantly better bounds than previous work, which strictly depend on the number of states and actions. 
  The key of our approach is the regularization implied by the kernel density estimation method. We further demonstrate that this regularization is useful in practice, when `plugged in' the state-of-the-art VariBAD meta RL algorithm.
\end{abstract}

\section{Introduction}
In recent years, reinforcement learning (RL) became a dominant algorithmic framework for a variety of domains, including computer games~\cite{DQN}, robotic manipulation~\cite{RoboMan}, and autonomous driving~\cite{DRLCars}. Popular RL algorithms, however, are characterized by a high sample inefficiency, due to the exploration-exploitation problem -- the need to balance between obtaining more information about the environment versus acting based on such
information. Indeed, most RL success stories required a very long training process, only possible in simulation.


For an agent to learn fast, additional structure of the problem is required. In Meta RL~\cite{RL2,VariBad,MAML,rakelly2019efficient}, agents are allowed to train on a set of training tasks, sampled from the same task distribution as the task they will eventually be tested on. The hope is that similar structure between the tasks could be identified during learning, and exploited to quickly solve the test task. 
It has recently been observed that the meta RL problem is related to the Bayesian RL formulation, where each task is modelled as a Markov decision process (MDP), and the distribution over tasks is the \textit{Bayesian prior}~\cite{ortega2019meta,VariBad}. In this formulation, the optimal meta RL policy is the \textit{Bayes-optimal policy} -- the policy that maximizes expected return, where the expectation is taken with respect to the prior MDP distribution.

While significant empirical progress in meta RL has been made, the theoretical understanding of the problem is still limited. A central question, which we focus on in this work, is the probably approximately correct (PAC) analysis of meta RL, namely, \textit{how many training tasks are required to guarantee performance that is approximately Bayes-optimal with high probability}. Practical motivation for studying this question includes the lifelong learning setting~\cite{garcia2019meta}, where training tasks are equivalent to tasks that the agent had previously encountered, and we seek agents that learn as quickly as possible, and the offline meta RL setting~\cite{offlineBRL}, where training data is collected in advance, and therefore estimating how much data is needed is important. 

Recently, \citet{RegBounds} proposed the first PAC bounds for meta RL, using a \textit{model free} approach. In their work, a history-dependent policy was trained to optimize the return on the set of training MDPs, where \textit{policy-regularization} was added to the loss of each MDP in the data. The bounds in \cite{RegBounds} scale with the number of states exponentiated by the length of the history, intuitively due to the number of possible histories that can be input to the policy.
In this work, we propose a \textit{model based} approach for PAC meta RL. Our idea is that instead of regularizing the policy during training, as in \cite{RegBounds}, we can learn a regularized version of the \textit{distribution} of MDPs from the training data. Subsequently, we can train an agent to be Bayes optimal on this learned distribution. Intuitively, if we can guarantee that the learned distribution is `close enough' to the real prior, we should expect the learned policy to be near Bayes optimal. Here, we derive such guarantees using techniques from the kernel density estimation (KDE) literature~\cite{KDE}.

Building on PAC results for KDE, we derive PAC bounds for our model based meta RL approach. Compared to the bounds in \cite{RegBounds}, our results require much less stringent assumptions on the MDP prior, and apply to both continuous and discrete state and action spaces. Interestingly, our bounds have a linear dependence on the horizon, compared to the exponential dependence in \cite{RegBounds}, but have an exponential dependence on the dimension of the prior distribution, corresponding to the exponential dependence on dimensionality of KDE. We argue, however, that in many practical cases the prior distribution lies in some low dimensional subspace. For such cases, we extend our analysis to use dimensionality reduction via principal component analysis (PCA), and obtain bounds where the exponential dependence is on the dimension of the low dimensional subspace.

\begin{wrapfigure}{r}{0.45\textwidth} 
    \centering
    \vspace{-1em}
    \includegraphics[width=0.45\textwidth]{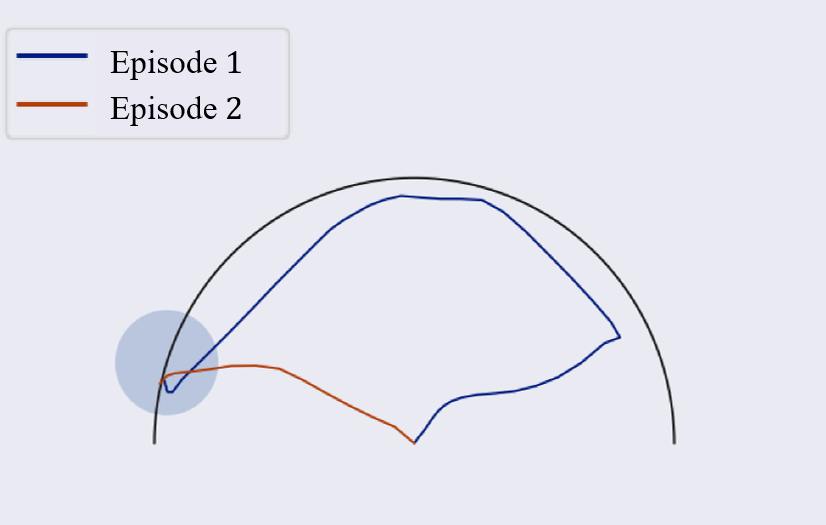}
    \caption{The HalfCircle domain (taken from \cite{offlineBRL}): the task is to navigate to a goal position that can be anywhere on the half-circle (light blue). A Bayes-optimal agent first searches along the half circle for the goal, and once found, moves directly towards it.}
    \vspace{-1em}
    \label{halfcircle}
\end{wrapfigure}

To visualize the advantage of our approach, consider the HalfCircle domain in Figure~\ref{halfcircle}, adapted from \cite{offlineBRL}: a 2-dimensional agent must navigate to a goal, located somewhere on the half-circle. A task therefore corresponds to the goal location, and the task distribution is uniform on the 1-dimensional half-circle. The bounds in \cite{RegBounds} depend on the number of states and actions, which are continuous here, and even if discretized, would result in excessive bounds. Intuitively, however, it is the low-dimensional structure of the task distribution that should determine the difficulty of the problem, and our bounds can indeed account for such. We remark that many benchmark meta RL task distributions exhibit similar low dimensional structure~\cite{RL2,MAML,offlineBRL}.

To complement our theoretical results, we demonstrate the practical potential of our approach by incorporating it in the VariBAD meta-RL method of \citet{VariBad}. In VariBAD, a variational autoencoder (VAE) is used to learn a low-dimensional latent representation of the task distribution, and a deep neural network is trained to approximate the Bayes optimal policy. We show that by estimating the task distribution using our kernel density estimation method, applied to the VAE latent space, and training the policy on sampled tasks from this distribution, we improve the generalization of the policy to tasks not seen in the training data.

\section{Background}
We specify our notation, and background on MDPs, density estimation, and dimensionality reduction.\\
\textbf{Notations:}
Estimators will be denoted with $\widehat{\times}$. $\norm{\cdot}$ denotes the Euclidean norm, other norms will be denoted explicitly. 
For a set $X$, we denote by $\probset (X)$ the set of distributions over $X$. We denote by $v_{d}$ the volume of a unit ball in $\mathbb{R}^d$. For a space $\s$ we denote $\left|\s\right|$ as the volume of the space. 

\subsection{Markov Decision Processes}
We follow the formal setting of \cite{RegBounds}. A Markov Decision Process (MDP) $M$ is a 7-tuple $M=\left(\s, \A, \C, P_{\mathrm{init}}, C, P, H \right)$, where $\s$, $\A$ and $\C$ are the state, action and cost spaces, respectively, $P_{\mathrm{init}}$ is the initial state distribution, $C: \s \times \A \rightarrow \probset(\C)$ is the cost function, $P: \s \times \A \rightarrow \probset (\s)$ is the transition function and $H\in\mathbb{Z}^{+}$ is the horizon. We denote by $P\left(c, s' \mid s, a\right)=P\left(s' \mid s, a\right)C\left(c \mid s, a\right)$ the probability of transitioning from state $s$ given action $a$ to state $s'$ with a cost of $c$, $\forall s,s'\in S, \forall a\in A$ and $\forall c\in \C$. When it is not clear from the context, we add the subscript $\times_M$ for the variables in the 7-tuple to indicate that they correspond to the particular MDP $M$. 
We denote the space of MDPs as $\M$, and use the term MDP and task interchangeably. An agent interacts with an MDP at discrete time steps: the state at time $t$ is $s_t$, where $s_0\sim P_{\mathrm{init}}$, the agent chooses action $a_t$, and then the state transitions to $s_{t+1}\sim P(\cdot | s_t,a_t)$ and cost $c_t \sim C(\cdot | s_t,a_t)$ is observed.
We denote the history at time $t$ as $h_{t}=\left\{s_{0}, a_{0}, c_{0}, s_{1}, a_{1}, c_{1} \ldots, s_{t}\right\}$, and the space of possible histories at time $t$ as $\mathcal{H}_t$.
The agent acts based on a history-dependent policy $\pi: \mathcal{H}_t \rightarrow \probset (\A)$.
Every $H$ steps, the episode ends and the state is reset, meaning it is sampled again from $P_{\mathrm{init}}$. The agent's goal in an MDP is to minimize the expected cumulative cost $\mathbb{E}_{\pi , M}\left[\sum_{t=0}^{T-1} c_t\right]$. Note that the MDP horizon $H$ and the horizon $T$ of the cumulative cost might be different -- this is important for our subsequent development, where the agent will face an unknown MDP, and can therefore improve its performance between episodes.

\subsection{Kernel Density Estimation} \label{sec:KDE}
Density estimation is the approximation of an unknown probability density function $f(x)$, $x \in \mathbb{R}^d$, based on $n$ i.i.d.~samples from it $\{X_i\}_{i=1}^n$. One popular approach to density estimation is kernel density estimation (KDE \cite{KDE}). Let $K(x)$ denote a kernel function, which is a probability measure over $\mathbb{R}^d$. Let $\mathbf{H_0}$ denote a $d \times d$ positive definite and symmetric matrix satisfying $det\left(\mathbf{H_0}\right)=1$, referred to as a unit bandwidth matrix. Following the formulation of Jiang \cite{KDEBounds}, we define the KDE with bandwidth $h>0$ as follows:
\begin{small}
$
\begin{aligned}
\widehat{f}_{\mathbf{H_0},h}(x) = \frac{1}{n \cdot h^{d}} \sum_{i=1}^{n} K\left(\frac{\mathbf{H}_{\mathbf{0}}^{-1 / 2}\left(x-X_{i}\right)}{h}\right).
\end{aligned}
$
\end{small}
Intuitively, $h$ controls the bandwidth size, while $\mathbf{H_0}$ allows for asymmetric bandwidth along different dimensions. We make this clearer with a simple example:

\begin{example} \label{example:GaussianKDE} [Symmetric Gaussian KDE]
For a Gaussian kernel function: $K(x)=\frac{1}{\left(\sqrt{2\pi}\right)^{d}}e^{-\frac{1}{2} x^Tx},$ bandwidth matrix $\mathbf{H}_0 = \mathbf{I}$, and bandwidth $h$, we obtain the Gaussian KDE:
$
    \widehat{f}_G(x) := \frac{1}{n \left(h \sqrt{2\pi}\right)^{d}} \sum_{i=1}^{n} e^{-\frac{1}{2h^2} \left(x-X_i\right)^T\left(x-X_i\right)}.
$
This popular instance of the KDE can be understood as placing a Gaussian distribution with zero mean and standard deviation $h$ over each sample $X_i$.
\end{example}


\citet{KDEBounds} provided finite sample bounds for the $L_{\infty}$ norm of the KDE error under mild assumptions.

\begin{assumption} \label{less than inf}
    $\|f\|_{\infty}<\infty$
\end{assumption}

\begin{assumption} \label{assumption:SpherSymKernel}
    The kernel function is spherically symmetric and non-increasing, meaning there exists a non-increasing function $k: \mathbb{R}_{\geq 0} \rightarrow \mathbb{R}_{\geq 0} \text { such that } K(x)=\kfunc (\norm{x}) \text { for } x \in \mathbb{R}^d$. Further, the kernel function decays exponentially, meaning there exists $\rho, C_{\rho}, t_{0}\in \mathbb{R}>0$ such that $\kfunc(t) \leq C_{\rho} \cdot \exp \left(-t^{\rho}\right), \forall t>t_0$. 
\end{assumption}

Notice that the Gaussian kernel in Example \ref{example:GaussianKDE} satisfies Assumption \ref{assumption:SpherSymKernel}.

\begin{assumption} \label{Holder}
    $f$ is $\alpha$-H\"{o}lder continuous, meaning there exists  $0<\alpha \leq 1$, $C_{\alpha}>0$, such that $\left|f(x)-f\left(x^{\prime}\right)\right| \leq$ $C_{\alpha}\norm{x-x^{\prime}}^{\alpha}$, $\forall x, x^{\prime} \in \mathbb{R}^d$.
\end{assumption}

\begin{theorem}\label{theorem:KDEorigBounds} [Theorem 2 of \citet{KDEBounds}] Under Assumptions \ref{less than inf} - \ref{Holder}, there exists a positive constant $C^{\prime} \equiv C^{\prime}\left(d, \norm{f}_{\infty}, C_{\alpha}, \alpha, K\right)$ such that the following holds with probability at least $1-1 / n$, uniformly in $h>(\log n / n)^{1 / d}$ and valid unit bandwidths matrices $\mathbf{H}_{0}$:
\begin{equation}\label{eq:KDE_bound}
\norm{\widehat{f}_{\mathbf{H_0},h}(x)-f(x)}_\infty<C^{\prime} \cdot\left(\frac{h^{\alpha}}{\sigma_{min}^{\alpha / 2}}+\sqrt{\frac{\log n}{n \cdot h^{d}}}\right),
\end{equation}
where $\sigma_{min}$ is the smallest eigenvalue of $\mathbf{H_0}$. 
\end{theorem}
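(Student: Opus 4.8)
The plan is to control the error through the classical bias--variance decomposition
$\widehat{f}_{\mathbf{H_0},h}(x) - f(x) = \big(\widehat{f}_{\mathbf{H_0},h}(x) - \mathbb{E}\,\widehat{f}_{\mathbf{H_0},h}(x)\big) + \big(\mathbb{E}\,\widehat{f}_{\mathbf{H_0},h}(x) - f(x)\big)$, matching the \emph{deterministic bias} term to the $h^{\alpha}/\sigma_{min}^{\alpha/2}$ contribution and the \emph{stochastic fluctuation} term to the $\sqrt{\log n/(n h^d)}$ contribution, and then taking the supremum over $x$. Since both pieces must be bounded uniformly over $x \in \mathbb{R}^d$ \emph{and} over all admissible $h$ and unit bandwidth matrices $\mathbf{H_0}$, the decomposition is only the easy scaffolding; the real work is the uniform stochastic control.

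For the bias, I would substitute $u = \mathbf{H_0}^{-1/2}(x - X_i)/h$. Because $\det(\mathbf{H_0}) = 1$, the Jacobian is exactly $h^d$, so $\mathbb{E}\,\widehat{f}_{\mathbf{H_0},h}(x) = \int K(u)\, f\!\left(x - h\,\mathbf{H_0}^{1/2} u\right)\,du$. Using that $K$ is a probability density, I would rewrite the bias as $\int K(u)\,[f(x - h\,\mathbf{H_0}^{1/2} u) - f(x)]\,du$ and bound the bracket via Assumption \ref{Holder}, giving $C_\alpha\, h^\alpha \norm{\mathbf{H_0}^{1/2} u}^\alpha$. Bounding $\norm{\mathbf{H_0}^{1/2} u}$ through the operator norm of $\mathbf{H_0}^{1/2}$, which the constraint $\det(\mathbf{H_0})=1$ ties to the smallest eigenvalue $\sigma_{min}$, and absorbing the finite kernel moment $\int K(u)\norm{u}^\alpha\,du$ (finite thanks to the exponential tail in Assumption \ref{assumption:SpherSymKernel}) into the constant, yields the $h^\alpha/\sigma_{min}^{\alpha/2}$ term, uniformly in $x$.

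The main obstacle is the stochastic term, where I need a \emph{sup-norm} concentration bound holding simultaneously for all $x$, $h$, and $\mathbf{H_0}$. The natural route is empirical process theory: fix the centred function class $\mathcal{G} = \{\, y \mapsto K(\mathbf{H_0}^{-1/2}(x-y)/h) \,\}$ indexed by $(x,h,\mathbf{H_0})$. Assumption \ref{assumption:SpherSymKernel} (spherical symmetry and monotonicity of $\kfunc$) is precisely what makes $\mathcal{G}$ a VC-type class: each function is a fixed monotone transform of a quadratic form in finitely many parameters, so the subgraphs have finite VC dimension and $\mathcal{G}$ has polynomial uniform covering numbers. I would then apply a Talagrand-type concentration inequality for the supremum of the empirical process (in the spirit of the uniform-in-bandwidth results of Einmahl--Mason and Gin\'e--Guillou), controlling the expected supremum by an entropy integral and the per-term variance, which is of order $\norm{f}_\infty/h^d$ (using Assumption \ref{less than inf} and the bounded moments from the exponential tail). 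The entropy contributes the $\log n$ factor and the variance the $1/(n h^d)$ factor, producing $\sqrt{\log n/(n h^d)}$; the restriction $h > (\log n/n)^{1/d}$ guarantees the variance regime dominates, so the Bernstein/Talagrand bound is sharp rather than degrading to a worse logarithmic rate.

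Finally I would combine the two bounds, take a union over the (discretised, then chained) ranges of $h$ and $\mathbf{H_0}$ — the uniform-in-bandwidth peeling being the delicate bookkeeping — and collect all dimension- and kernel-dependent constants into the single $C' \equiv C'(d, \norm{f}_\infty, C_\alpha, \alpha, K)$, with the failure probability set to $1/n$. I expect the genuinely hard part to be the empirical-process step: establishing the VC/covering-number property of $\mathcal{G}$ and pushing the concentration inequality through uniformly over the continuum of bandwidths, rather than the essentially routine bias computation.
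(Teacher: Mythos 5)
This statement is imported verbatim as Theorem~2 of \citet{KDEBounds}; the paper gives no proof of it, and even the appendix proof of Lemma~\ref{lemma:gaussianKDEbounds} only quotes Jiang's intermediate bounds as black boxes. So there is no in-paper argument to compare against. That said, your outline is a faithful reconstruction of how the cited result is actually established: a bias--variance split, with the bias handled by the H\"older condition (Assumption~\ref{Holder}) together with the finiteness of the kernel moment $\int_0^\infty \kfunc(t)\,t^{d+\alpha}\,dt$ (guaranteed by the exponential tail in Assumption~\ref{assumption:SpherSymKernel}), and the stochastic term handled by a uniform-in-$(x,h,\mathbf{H_0})$ concentration bound for the empirical process over the VC-type class of rescaled kernels, in the Gin\'e--Guillou / Einmahl--Mason style, with the restriction $h>(\log n/n)^{1/d}$ keeping the bound in the variance-dominated regime. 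You also correctly identify the empirical-process step as the substantive part.

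One imprecision worth flagging in your bias step: the operator norm of $\mathbf{H_0}^{1/2}$ equals $\sigma_{max}^{1/2}$, and the constraint $\det(\mathbf{H_0})=1$ only yields $\sigma_{max}\le \sigma_{min}^{-(d-1)}$, so the route you describe literally produces a bias of order $h^{\alpha}\sigma_{min}^{-\alpha(d-1)/2}$ rather than the stated $h^{\alpha}\sigma_{min}^{-\alpha/2}$ once $d>2$. Matching the stated exponent requires following Jiang's own bookkeeping, which expresses the bias through the modulus of continuity of $f$ over balls of radius $h\norm{u}/\sqrt{\sigma_{min}}$ rather than through a crude operator-norm bound. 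This discrepancy is immaterial for the rest of the paper, which only ever instantiates the theorem with $\mathbf{H_0}=\mathbf{I}$ (so $\sigma_{min}=\sigma_{max}=1$), but if you intend your sketch as a self-contained proof of the statement as written, the bias constant needs this extra care.
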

The first term on the right hand side of \eqref{eq:KDE_bound} is a bias term, which can be reduced by reducing $h$. However, the second term will grow when $h$ is reduced. In general, the sample complexity under an optimal bandwidth scales exponentially in the dimension $d$ (see Lemma \ref{lemma:OptimalBandwidth} for a specific example).

\subsection{Principal Component Analysis} \label{section:PCAback}
Principal component analysis (PCA \cite{PCAOrig,PCAbounds}) is a popular linear dimensionality reduction technique. For a $d$-dimensional random variable $X$, consider $n$ i.i.d. samples of $X$, denoted $\{X_i\}_{i=1}^n$. Reducing the dimension from $d$ to $d'$ is achieved by projection -- multiplying a vector in $\mathbb{R}^d$ by a rank $d'$ orthogonal matrix $P$. 
The expected projection error of $P$ is $R(P) := \mathbb{E}\left[\|X-P X\|^{2}\right]$. Similarly, the empirical projection error of $P$ is $R_N(P) := \sum_{i=1}^n\left[\|X_i-P X_i\|^{2}\right]$. We denote by $\Sigma$ and $\widehat{\Sigma}$ the covariance and empirical covariance of $X$, respectively. We further denote by $\lambda_{1} \geq \lambda_{2} \geq \cdots \lambda_{d} > 0$ the eigenvalues of $\Sigma$, and by $\widehat{\lambda}_{1} \geq \widehat{\lambda}_{2} \geq \cdots \widehat{\lambda}_{d} > 0$ the eigenvalues of $\widehat{\Sigma}$. 

The PCA projection $\widehat{P}_{d'}$ is constructed from the eigenvectors corresponding to the $d'$ largest eigenvalues of $\widehat{\Sigma}$. Its theoretical risk is:
$
\delta^{PCA}_{d'}=\mathbb{E} \left[R\left(\widehat{P}_{d'}\right)\right]-\min _{P \in \mathcal{P}_{d'}} R(P),
$
where $\mathcal{P}_{d'}$ is the space of $d\times d$ orthogonal matrices with rank $d'\leq d$, and the expectation is with respect to the $n$ samples. \citet{PCAbounds} bound the risk for sub-Gaussian random variables.

\begin{assumption} \label{assumption:subgauss} 
$X$ is sub Gaussian, meaning that its second moment is finite and there exists a constant $C_{sg}$ such that 
\begin{small}
$\sup _{k \geq 1} k^{-1 / 2} \mathbb{E}\left[|X\cdot u |^{k}\right]^{1 / k} \leq C_{sg}\mathbb{E}\left[\left(X\cdot u \right)^{2}\right]^{1 / 2}, \forall u \in \mathbb{R}^d$\end{small}
.\footnote{This definition is equivalent to the standard sub-Gaussian definition, $P(|X|>t)\leq \exp(1 - t^2/\tilde{C}_{sg})$, where $C_{sg}$ and $\tilde{C}_{sg}$ differ by a universal constant~\cite{vershynin2010introduction}.}
\end{assumption}
It is worth mentioning that any bounded random variable satisfies Assumption \ref{assumption:subgauss}.
\begin{theorem}
\label{theorem:PCABound}[Proposition 2.2 of \citet{PCAbounds}] for a random variable $X$ of dimension $d$ that satisfies Assumption \ref{assumption:subgauss}, we have that
\begin{small}
$
\delta_{d'}^{PCA}\leq \min \left(\frac{8C_{sg}^2 \sqrt{d'} tr(\Sigma)}{\sqrt{n}}, \frac{64 C_{sg}^4 tr^{2}(\Sigma)}{n\left(\lambda_{d'}-\lambda_{d'+1}\right)}\right).
$
\end{small}
\end{theorem}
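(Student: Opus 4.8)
The plan is to reduce the excess risk $\delta^{PCA}_{d'}$ to the fluctuation of the empirical covariance $\widehat\Sigma$ around $\Sigma$ in Frobenius norm, and then to control that fluctuation through the fourth moment of $X$ implied by Assumption \ref{assumption:subgauss}. Throughout, let $\norm{\cdot}_F$ denote the Frobenius norm and set $E := \widehat\Sigma - \Sigma$. First I would recast the objective spectrally: for an orthogonal projection $P$, $R(P) = \mathbb{E}\norm{(I-P)X}^2 = tr\big((I-P)\Sigma\big) = tr(\Sigma) - tr(P\Sigma)$, so minimizing $R$ over $\mathcal P_{d'}$ amounts to maximizing $tr(P\Sigma)$, whose maximizer $P^\star$ projects onto the top $d'$ eigenvectors of $\Sigma$; the identical computation with $\widehat\Sigma$ shows $\widehat P_{d'}$ maximizes $tr(P\widehat\Sigma)$. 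Using the optimality of $\widehat P_{d'}$ for $\widehat\Sigma$ (hence $tr\big((\widehat P_{d'}-P^\star)\widehat\Sigma\big)\ge 0$) and adding and subtracting $\widehat\Sigma$, I obtain the basic inequality
\[
R(\widehat P_{d'}) - R(P^\star) = tr\big((P^\star - \widehat P_{d'})\Sigma\big) \le tr\big((\widehat P_{d'} - P^\star)E\big) \le \norm{\widehat P_{d'}-P^\star}_F\,\norm{E}_F .
\]

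For the first branch of the bound I would estimate $\norm{\widehat P_{d'}-P^\star}_F$ deterministically: any two rank-$d'$ orthogonal projections satisfy $\norm{P-Q}_F^2 = 2d' - 2\,tr(PQ) \le 2d'$, so $\norm{\widehat P_{d'}-P^\star}_F \le \sqrt{2d'}$, and taking expectations gives $\delta^{PCA}_{d'} \le \sqrt{2d'}\,\mathbb{E}\norm{E}_F$. For the second branch I would add a curvature lower bound exploiting the spectral gap: because $P^\star$ collects the top $d'$ eigenvectors, every rank-$d'$ projection $Q$ obeys $tr\big((P^\star-Q)\Sigma\big) \ge \tfrac12(\lambda_{d'}-\lambda_{d'+1})\norm{P^\star-Q}_F^2$ (a variational/Weyl-type estimate, which I would verify is tight on the two-dimensional rotation model). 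Combined with the basic inequality at $Q=\widehat P_{d'}$, this forces $\norm{\widehat P_{d'}-P^\star}_F \le 2\norm{E}_F/(\lambda_{d'}-\lambda_{d'+1})$; substituting back self-improves the rate to $R(\widehat P_{d'})-R(P^\star) \le 2\norm{E}_F^2/(\lambda_{d'}-\lambda_{d'+1})$, and taking expectations yields $\delta^{PCA}_{d'} \le 2\,\mathbb{E}\norm{E}_F^2/(\lambda_{d'}-\lambda_{d'+1})$. The claimed bound is the minimum of the two branches.

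It then remains to control the covariance fluctuation. Since $\widehat\Sigma - \Sigma = \tfrac1n\sum_i(X_iX_i^\top - \Sigma)$ is an average of i.i.d.\ centered matrices and $\norm{XX^\top}_F^2 = \norm{X}^4$,
\[
\mathbb{E}\norm{E}_F^2 = \tfrac1n\Big(\mathbb{E}\norm{X}^4 - \norm{\Sigma}_F^2\Big) \le \tfrac1n\,\mathbb{E}\norm{X}^4 .
\]
To bound the fourth moment I would expand $\norm{X}^4 = \big(\sum_j (X\cdot e_j)^2\big)^2$ in an orthonormal basis $\{e_j\}$, apply Cauchy--Schwarz across coordinates, and use Assumption \ref{assumption:subgauss} with $k=4$ to get $\mathbb{E}[(X\cdot e_j)^4] \le 16\,C_{sg}^4\,\mathbb{E}[(X\cdot e_j)^2]^2$, which sums to $\mathbb{E}\norm{X}^4 \le 16\,C_{sg}^4\,tr(\Sigma)^2$. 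Hence $\mathbb{E}\norm{E}_F^2 \le 16\,C_{sg}^4\,tr(\Sigma)^2/n$, and by Jensen $\mathbb{E}\norm{E}_F \le 4\,C_{sg}^2\,tr(\Sigma)/\sqrt n$. Plugging these into the two branches reproduces the stated forms $\tfrac{8C_{sg}^2\sqrt{d'}\,tr(\Sigma)}{\sqrt n}$ and $\tfrac{64C_{sg}^4\,tr^2(\Sigma)}{n(\lambda_{d'}-\lambda_{d'+1})}$ up to the indicated numerical constants.

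I expect the main obstacle to be the fast-rate curvature lemma together with the exact fourth-moment constant. The curvature estimate must be \emph{global} -- it has to hold for every competitor $Q\in\mathcal P_{d'}$ rather than only for $Q$ near $P^\star$ -- so it cannot be obtained by a naive linearization and instead requires the eigenvalue-gap argument above; and extracting the sub-Gaussian dependence as precisely $C_{sg}^4$ (rather than a dimension-dependent constant) hinges on performing the coordinatewise moment expansion carefully instead of bounding $\mathbb{E}\norm{X}^4$ through the operator norm of $\Sigma$.
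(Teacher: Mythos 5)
Your argument is sound, but note first that the paper contains no proof of Theorem~\ref{theorem:PCABound} to compare against: it is imported verbatim as Proposition~2.2 of \citet{PCAbounds}, and the appendix only ever \emph{uses} it (in the proof of Lemma~\ref{lemma:decayingeigen}). So what you have produced is a from-scratch reconstruction of the cited external result, and it is essentially the same route as the original source: the basic inequality $R(\widehat P_{d'})-R(P^\star)\le \langle \widehat P_{d'}-P^\star, \widehat\Sigma-\Sigma\rangle$, closed either with the trivial bound $\norm{\widehat P_{d'}-P^\star}_F\le\sqrt{2d'}$ (slow branch) or with the global curvature lemma $\mathrm{tr}\bigl((P^\star-Q)\Sigma\bigr)\ge\tfrac12(\lambda_{d'}-\lambda_{d'+1})\norm{P^\star-Q}_F^2$ (fast branch), plus the Frobenius fluctuation bound $\mathbb{E}\norm{\widehat\Sigma-\Sigma}_F^2\le \mathbb{E}\norm{X}^4/n\le 16C_{sg}^4\,\mathrm{tr}^2(\Sigma)/n$ from the $k=4$ case of Assumption~\ref{assumption:subgauss}. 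Each step checks out: the curvature inequality is indeed global (it is the standard Vu--Lei-type lemma, provable by diagonalizing $\Sigma$ and using $0\le Q_{jj}\le 1$, $\mathrm{tr}(Q)=d'$, and $\norm{P^\star-Q}_F^2=2\sum_{j>d'}Q_{jj}$), the self-improvement to $2\norm{E}_F^2/(\lambda_{d'}-\lambda_{d'+1})$ is valid, and your constants ($4\sqrt{2}\le 8$ and $32\le 64$) land inside the stated ones. Two small points worth making explicit if this were written out: (i) the identity $R(P)=\mathrm{tr}((I-P)\Sigma)$ takes $\Sigma=\mathbb{E}[XX^\top]$, i.e.\ the uncentered second-moment matrix, which is how \citet{PCAbounds} (and implicitly Section~\ref{section:PCAback}) set things up; and (ii) when $\lambda_{d'}=\lambda_{d'+1}$ the fast branch is vacuous, which the outer minimum absorbs.
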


\vspace{-0.5em}
\section{Problem Statement}\label{section:Problem Statement}
\vspace{-0.5em}
In this paper we make use of a parametric representation of MDPs. We suppose there is some parametric space $\Theta$ and a mapping $g:\Theta \rightarrow \M$, which maps parameters to MDPs. We next give two examples of such parametrizations, which will be referred to throughout the text to illustrate properties of our analysis.
\begin{example}\label{example:tabular} [Tabular mapping] Consider the case where  $\s,\A,T, \C$ are fixed and finite, while only $C$ and $P$ differ between different MDPs. 
The parametric space is defined by $\Theta = \Theta_C \times \Theta_P$, where $\Theta_C \subset \mathbb{R}^{|\s|\times |\A| \times |\C|}$, and $\Theta_P \subset \mathbb{R}^{|\s|\times |\A| \times |\s|}$, such that $\forall \theta_P \in \Theta_P$, $\forall \theta_C \in \Theta_C$,  
$\forall a \in \A$ and $\forall s \in \s$: 
$\theta_P(a, s, \cdot)$ is on the $|\s|$-simplex and $\theta_C(a, s, \cdot)$ is on the $|\C|$-simplex.

The parametric mapping is defined such that: $P(s'|s,a) = \theta_P(s,a,s')$ and $C(c|s,a) = \theta_C(s,a,c)$. 
\end{example}
The tabular mapping in Example \ref{example:tabular} does not assume any structure of the MDP space, and is common in the Bayesian RL literature, e.g., in the Bayes-adaptive MDP model of \citet{duff2002optimal}.
The next example considers a structured MDP space, and is inspired by the domain in Figure \ref{halfcircle}.
\begin{example} \label{example:halfcircleparametric}[Half-circle 2D navigation] 
Consider the following: 
$\s,\A = \mathbb{R}^2$, and $P(a,s,s')=1$ for $s'=s+a$ $\forall s \in \s$, $\forall a \in \A$. The parameter space is $\Theta = \left[0, \pi\right]$. The parametric mapping $C(a,s) = 1$, $\forall a\in \A$, $\forall s \in \s$ such that 
$\norm{s-[
R\cos{\theta}, R\sin{\theta}]}_2 \leq r$, where $r,R \in \mathbb{R}$ .  
\end{example}

\subsection{The Learning Problem}
We first define the cumulative cost for a history-dependent policy $\pi$ and an MDP $M\in \M$ as
$
 L_{M, \pi}=\mathbb{E}_{\pi, M}\left[\sum_{t=0}^{T-1} c_t \right].
$
We follow the Bayesian RL (BRL) formulation~\cite{ghavamzadeh2016bayesian}, and assume a prior distribution over the MDP parameter space $f\in \probset ( \Theta )$. 
Our objective is the expected cumulative cost over a randomly sampled MDP from the prior:
\begin{small}
\begin{equation*}
    \mathcal{L}_f(\pi)=\mathbb{E}_{\theta \sim f} \left[ L_{g\left(\theta \right), \pi}\right] =  \mathbb{E}_{\theta \sim f} \left[ \mathbb{E}_{\pi, M=g\left(\theta\right)}\left[\sum_{t=0}^{T-1} c_t \right]\right],
\end{equation*}
\end{small}
where we used the parametric representation introduced above.
A policy $\pi_{BO} \in \argmin_{\pi}\mathcal{L}_f(\pi)$ is termed \textit{Bayes optimal}. If the prior distribution is known, one can calculate the Bayes optimal policy~\cite{ghavamzadeh2016bayesian}. However, in our setting we assume that $f$ is not known in advance, motivating the following learning problem.

We are given a training set of MDPs, $\left\{\theta\right\}_{i=1}^N$, sampled independently from the prior $f(\theta)$. Our goal is to use these MDPs to calculate a policy that minimizes the regret:
\begin{small}
$$
\mathcal{R}(\pi)=\mathcal{L}_f(\pi)-\mathcal{L}_f\left(\pi_{\mathrm{BO}}\right)= \mathbb{E}_{\theta \sim f}\left[\mathbb{E}_{\pi , M=g(\theta)}\left[\sum_{t=0}^{T-1} c_t\right]
- \mathbb{E}_{\pi_{\mathrm{BO}} , M=g(\theta)}\left[\sum_{t=0}^{T-1} c_t \right]\right]
$$
\end{small}
Note that the expectation above is with respect to the true prior, therefore, the regret can be interpreted as how well the policy calculated from the training data \textit{generalizes} to an unseen test MDP.

\section{Generalization Bounds} \label{sec:genbounds}
We next provide PAC bounds for the learning problem of 
Section \ref{section:Problem Statement}. Our general idea is to first estimate the prior distribution $f(\theta)$ from the training set using KDE, and then solve for the Bayes optimal policy with respect to the KDE-estimated distribution instead of the real prior. For an estimator $\widehat{f}(\theta)$ of the real prior $f(\theta)$, we define the estimated Bayes optimal policy:  $\pi_{\widehat{f}}^* \in \argmin_{\pi}\mathcal{L}_{\widehat{f}}(\pi)$.

We start by showing that we can bound the regret of an estimated Bayes optimal policy, as a function of the estimation error of the prior itself. The proof, detailed in Section \ref{proof:L1_BO_bound} in the supplementary, is a simple application of norm inequalities, and exploiting the fact that the total cost is bounded.
\begin{lemma} \label{lemma:L1_BO_bound}
Let $\widehat{f}\in \probset (\mathbb{R}^d)$ be an estimator of the real prior $f$ over the parametric space $\Theta$. We have that:
$
    \mathcal{R}(\pi_{\widehat{f}}^*) = \mathcal{L}_{f}(\pi_{\widehat{f}}^*) - \mathcal{L}_{f}(\pi_{\mathrm{BO}})\leq 2 C_{max} T \norm{f-\widehat{f}}_1,
$
and for a bounded parametric space of volume $\left|\Theta\right|$ we have:
$
    \mathcal{R}(\pi_{\widehat{f}}^*) = \mathcal{L}_{f}(\pi_{\widehat{f}}^*) - \mathcal{L}_{f}(\pi_{\mathrm{BO}})\leq 2 C_{max} T \left|\Theta \right| \norm{f-\widehat{f}}_{\infty}.
$
\end{lemma}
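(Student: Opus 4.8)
The plan is to tie the regret of $\pi_{\widehat{f}}^*$ directly to the prior-estimation error through a standard three-term decomposition, with the optimality of $\pi_{\widehat{f}}^*$ under $\widehat{f}$ doing the essential work. The one ingredient I would establish first is a \emph{per-policy} bound on the gap between the true and estimated objectives. For \textbf{any} fixed history-dependent policy $\pi$, both $\mathcal{L}_f(\pi)$ and $\mathcal{L}_{\widehat{f}}(\pi)$ are the expectation of the same quantity $L_{g(\theta),\pi}$, taken against $f$ and $\widehat{f}$ respectively, so their difference is an integral against $f-\widehat{f}$. Applying the triangle inequality for integrals, $\lvert \mathcal{L}_f(\pi) - \mathcal{L}_{\widehat{f}}(\pi)\rvert = \lvert \int_\Theta (f(\theta) - \widehat{f}(\theta))\, L_{g(\theta),\pi}\, d\theta\rvert \le \int_\Theta \lvert f(\theta) - \widehat{f}(\theta)\rvert\, \lvert L_{g(\theta),\pi}\rvert\, d\theta$. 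Since each per-step cost is bounded by $C_{max}$ and the cumulative cost runs over $T$ steps, $\lvert L_{g(\theta),\pi}\rvert \le C_{max} T$ uniformly in $\theta$ and $\pi$, which factors out of the integral to give $\lvert \mathcal{L}_f(\pi) - \mathcal{L}_{\widehat{f}}(\pi)\rvert \le C_{max}\, T\, \norm{f - \widehat{f}}_1$.

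With this uniform bound in hand, I would decompose the regret by inserting the estimated objective evaluated at both policies:
\[
\mathcal{R}(\pi_{\widehat{f}}^*) = \big(\mathcal{L}_f(\pi_{\widehat{f}}^*) - \mathcal{L}_{\widehat{f}}(\pi_{\widehat{f}}^*)\big) + \big(\mathcal{L}_{\widehat{f}}(\pi_{\widehat{f}}^*) - \mathcal{L}_{\widehat{f}}(\pi_{\mathrm{BO}})\big) + \big(\mathcal{L}_{\widehat{f}}(\pi_{\mathrm{BO}}) - \mathcal{L}_f(\pi_{\mathrm{BO}})\big).
\]
The first and third brackets are each at most $C_{max}\, T\, \norm{f - \widehat{f}}_1$ by the per-policy bound above, applied to $\pi = \pi_{\widehat{f}}^*$ and $\pi = \pi_{\mathrm{BO}}$ respectively. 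The middle bracket is non-positive: because $\pi_{\widehat{f}}^* \in \argmin_{\pi}\mathcal{L}_{\widehat{f}}(\pi)$ by definition, it attains no larger estimated cost than $\pi_{\mathrm{BO}}$, so $\mathcal{L}_{\widehat{f}}(\pi_{\widehat{f}}^*) - \mathcal{L}_{\widehat{f}}(\pi_{\mathrm{BO}}) \le 0$. Summing the three contributions yields $\mathcal{R}(\pi_{\widehat{f}}^*) \le 2 C_{max}\, T\, \norm{f - \widehat{f}}_1$, which is the first claim. For the second claim I would convert the $L_1$ distance into an $L_\infty$ distance on the bounded parameter space via $\norm{f - \widehat{f}}_1 = \int_\Theta \lvert f(\theta) - \widehat{f}(\theta)\rvert\, d\theta \le \lvert \Theta\rvert \cdot \norm{f - \widehat{f}}_\infty$, and substitute into the first bound.

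I do not anticipate a genuine obstacle, as the argument is entirely elementary; the two points that require care are (i) verifying that $L_{g(\theta),\pi}$ is bounded by $C_{max} T$ uniformly, which both legitimizes pulling the absolute value through the integral and guarantees every quantity involved is finite, and (ii) the sign of the middle bracket, which is precisely where the definition of $\pi_{\widehat{f}}^*$ as the $\mathcal{L}_{\widehat{f}}$-minimizer is used and is the only non-mechanical step. The factor of $2$ is the price of the two approximation terms in the decomposition and cannot be removed without additional structure.
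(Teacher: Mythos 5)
Your proposal is correct and follows essentially the same argument as the paper: a uniform per-policy bound $\lvert \mathcal{L}_f(\pi)-\mathcal{L}_{\widehat{f}}(\pi)\rvert \le C_{max}T\norm{f-\widehat{f}}_1$, combined with the optimality of $\pi_{\widehat{f}}^*$ under $\widehat{f}$ (the paper phrases this as the chain $\mathcal{L}_{f}(\pi_{\widehat{f}}^*)-A \le \mathcal{L}_{\widehat{f}}(\pi_{\widehat{f}}^*) \le \mathcal{L}_{\widehat{f}}(\pi_{\mathrm{BO}}) \le \mathcal{L}_{f}(\pi_{\mathrm{BO}})+A$, which is exactly your three-term decomposition), followed by the same $\norm{f-\widehat{f}}_1 \le \lvert\Theta\rvert\,\norm{f-\widehat{f}}_\infty$ step.
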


While the assumption of a finite parametric space volume is reasonable in practice, the volume size, appearing in the $\norm{\cdot}_{\infty}$ bound that we shall use in the proceeding analysis, grows exponentially with the dimension of $\Theta$. In Section \ref{sec:PCA} we will discuss how, under some assumptions, we can relax this.

Lemma \ref{lemma:L1_BO_bound} provides a convenient framework for bounding the regret using any density estimation technique with a known bound.
We now consider the special case of the KDE (cf. Section~\ref{sec:KDE}) with a Gaussian kernel function and an optimal selection of bandwidth, as calculated in the following lemma.
We focus on the Gaussian kernel both for simplicity and because it is a popular choice in practice. Our method can be extended to any kernel function that satisfies assumption \ref{assumption:SpherSymKernel}.

\begin{lemma} \label{lemma:OptimalBandwidth}
The optimal KDE bandwidth is (up to a constant independent of $n$) $h^*=\left(\nicefrac{\log{n}}{n}\right)^{\frac{1}{2\alpha+d}}$.
\end{lemma}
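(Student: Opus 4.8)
The plan is to treat the right-hand side of the KDE error bound in Theorem~\ref{theorem:KDEorigBounds} as a function of the scalar bandwidth $h$ and minimize it analytically. First I would fix a valid unit bandwidth matrix $\mathbf{H_0}$, so that its smallest eigenvalue $\sigma_{min}$ is a constant independent of $n$; together with the constant $C'$, this lets me absorb all $n$-independent factors. Writing $A \equiv C'/\sigma_{min}^{\alpha/2}$ and $B \equiv C'\sqrt{\log n}$, the bound becomes $g(h) = A\, h^{\alpha} + B\, n^{-1/2}\, h^{-d/2}$, and the quantity we call the optimal bandwidth is simply the minimizer of $g$ over $h>0$.

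Next I would carry out the one-dimensional optimization. Both terms are positive, and $g$ diverges as $h\to 0^{+}$ (through the variance term $h^{-d/2}$) and as $h\to\infty$ (through the bias term $h^{\alpha}$), so $g$ attains its minimum at an interior stationary point. Setting $g'(h)=A\alpha h^{\alpha-1} - \tfrac{d}{2} B\, n^{-1/2} h^{-d/2-1}=0$ gives $h^{\alpha+d/2} = \tfrac{dB}{2A\alpha}\, n^{-1/2}$, which has a unique positive solution since the left-hand side is strictly increasing in $h$. Solving yields $h^{*} = \bigl(\tfrac{dB}{2A\alpha}\bigr)^{1/(\alpha+d/2)}\, n^{-1/(2\alpha+d)}$, and because $B$ carries the factor $\sqrt{\log n}$, collecting the $\log n$ and $n$ powers gives $h^{*} \propto (\log n / n)^{1/(2\alpha+d)}$ up to a constant independent of $n$, as claimed.

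Finally I would verify that this choice is admissible for Theorem~\ref{theorem:KDEorigBounds}, which requires $h > (\log n / n)^{1/d}$. Since $0<\alpha\le 1$ implies $\tfrac{1}{2\alpha+d} < \tfrac{1}{d}$, and since $\log n / n < 1$ for $n$ large, raising a number below one to a smaller exponent only increases it, so $(\log n/n)^{1/(2\alpha+d)} > (\log n/n)^{1/d}$; hence $h^{*}$ lies in the valid range for $n$ large enough that the absorbed constant does not violate the inequality.

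The computation is routine; the only points that require care are confirming that the unique stationary point is genuinely the global minimum — which I would justify from the boundary behavior of $g$ rather than from convexity, since $h^{\alpha}$ is concave for $\alpha\le 1$ and so $g$ need not be convex — and checking the admissibility constraint $h>(\log n/n)^{1/d}$ so that the invoked bound actually applies at $h^{*}$.
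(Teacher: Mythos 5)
Your proposal is correct and follows essentially the same route as the paper: both minimize the right-hand side of the bound in Theorem~\ref{theorem:KDEorigBounds} as a function of $h$ by finding the stationary point of $A h^{\alpha} + B\sqrt{\log n / (n h^{d})}$, the paper simply invoking a closed-form formula for the minimizer of $Ax^{a}+Bx^{b}$ where you carry out the differentiation explicitly. Your additional checks --- that the stationary point is a global minimum via the boundary behavior of $g$, and that $h^{*}$ satisfies the admissibility condition $h>(\log n/n)^{1/d}$ required by the theorem --- are sound and in fact slightly more careful than the paper's own argument.
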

The following result bounds the estimation error for this case. The proof, detailed in Section \ref{proof:gaussianKDEbounds} of the supplementary material, is based on Theorem 1 from \cite{KDEBounds}, where the constants are calculated explicitly for the Gaussian kernel case.
\begin{lemma} \label{lemma:gaussianKDEbounds} Under Assumptions \ref{less than inf} and \ref{Holder}, for a parametric space with finite volume $\left| \Theta \right|$ and a KDE with a Gaussian kernel $K(u)=\frac{e^{-\frac{1}{2} u^Tu}}{\left(2\pi\right)^{\frac{d}{2}}}$, $\mathbf{H_0}=\mathbf{I}$, and an optimal bandwidth $h^*$, we have that with probability at least $1-1/n$:
\begin{small}
$
    \sup _{x \in \mathbb{R}^{d}}\left|\widehat{f}_G(x)-f(x)\right| \leq C_d \cdot \left(\frac{\log{n}}{n}\right)^{\frac{\alpha}{2\alpha+d}},
$
\end{small}
where $C_d = C_{\alpha}2^{\frac{\alpha-1}{2}} + \frac{16d \sqrt{C_{\alpha}\Delta_{max}^{\alpha}\left(\Theta\right) +\frac{1}{\left| \Theta \right|}}}{\sqrt{2}\left(2\pi\right)^{\frac{d}{4}}}+\frac{64d^2}{(2\pi)^{\frac{d}{2}}}$, and $\Delta_{max}\left(\Theta \right)$ is the maximal $L_1$ distance between any two parameters in $\Theta$.
\end{lemma}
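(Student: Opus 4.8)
The plan is to specialize the general KDE bound to the Gaussian kernel at the optimal bandwidth, while making every kernel- and density-dependent constant explicit. Rather than invoke the ready-made Theorem \ref{theorem:KDEorigBounds}, whose constant $C'$ is opaque, I would return to the finite-sample decomposition underlying it (Theorem 1 of \cite{KDEBounds}), which splits the uniform error into a bias term proportional to $C_\alpha h^\alpha$ times a kernel moment, a leading stochastic term of order $\sqrt{\norm{f}_\infty \log n /(n h^d)}$, and a higher-order stochastic term of order $\log n/(n h^d)$. The three summands defining $C_d$ are precisely the specializations of these three contributions to the Gaussian kernel.

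First I would verify that the Gaussian kernel $K(u) = (2\pi)^{-d/2} e^{-u^T u/2}$ satisfies Assumption \ref{assumption:SpherSymKernel} with $\kfunc(t) = (2\pi)^{-d/2} e^{-t^2/2}$, identify a valid decay exponent $\aaaiparam$ and constant $C_\aaaiparam$, and compute the kernel integrals (normalization and the relevant absolute moment) that enter the bias and variance constants. These computations are where the factors $(2\pi)^{d/4}$ and $(2\pi)^{d/2}$, the $2^{(\alpha-1)/2}$, and the $16d$ and $64d^2$ coefficients arise, and this bookkeeping is the main technical obstacle: Jiang's constants are stated abstractly, so reducing them to the stated closed form requires reproducing the relevant steps of his proof with explicit Gaussian integrals rather than merely citing the final result.

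Next, since the abstract factor $\norm{f}_\infty$ appears in the leading stochastic term, I would replace it by an explicit quantity. Using only Hölder continuity (Assumption \ref{Holder}) together with the fact that $f$ integrates to one over $\Theta$, its average value is $1/\left|\Theta\right|$, so for any $x$ in the support $f(x) \leq 1/\left|\Theta\right| + C_\alpha \Delta_{max}^\alpha(\Theta)$ by comparing $f(x)$ to a point attaining the average. This yields $\norm{f}_\infty \leq C_\alpha \Delta_{max}^\alpha(\Theta) + 1/\left|\Theta\right|$, which explains the square-root factor in the second term of $C_d$ and means Assumption \ref{less than inf} is used only to guarantee the bound is finite, with no prior knowledge of $\norm{f}_\infty$ required.

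Finally I would substitute the optimal bandwidth $h^* = (\log n/n)^{1/(2\alpha+d)}$ of Lemma \ref{lemma:OptimalBandwidth}, noting that $\mathbf{H_0}=\mathbf{I}$ gives $\sigma_{min}=1$. A direct calculation shows the bias term $(h^*)^\alpha$ and the leading stochastic term $\sqrt{\log n/(n (h^*)^d)}$ each equal $(\log n/n)^{\alpha/(2\alpha+d)}$, whereas the higher-order term $\log n/(n(h^*)^d) = (\log n/n)^{2\alpha/(2\alpha+d)}$ decays faster and, once $n$ is large enough that $\log n/n \leq 1$, is bounded by the same rate. Collecting the three explicit constants then produces the stated $C_d$ at rate $(\log n/n)^{\alpha/(2\alpha+d)}$, with the probability $1-1/n$ inherited directly from Theorem \ref{theorem:KDEorigBounds}.
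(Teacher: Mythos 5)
Your proposal is correct and follows essentially the same route as the paper's proof: both return to Theorem 1 of Jiang with its explicit constants, specialize the kernel moments to the Gaussian, bound $\norm{f}_{\infty}$ by $C_{\alpha}\Delta_{max}^{\alpha}\left(\Theta\right)+\frac{1}{\left|\Theta\right|}$ via H\"{o}lder continuity and the average-value argument, and substitute the optimal bandwidth of Lemma \ref{lemma:OptimalBandwidth}. The only cosmetic difference is that the paper carries the $64d^{2}\kfunc(0)$ contribution as part of the coefficient of $\sqrt{\log n/(nh^{d})}$ rather than as a separately absorbed higher-order term, which produces the same third summand in $C_d$.
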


\begin{remark} \label{remark:Cd}
Usually, $\Delta_{max}\left(\Theta \right)$ will scale polynomially (or sub-polynomially) with $d$. For example, in case the parametric space is a $d$-dimensional hypercube with edge length $B$, we have: $\Delta_{max} = \sqrt{d} B$. One parametric case that can be represented as a hyper cube is Example \ref{example:tabular}, where the edge length is $B=1$, and $d=\left|\s \right|^2 \left|\A \right| + \left|\s \right|\left|\A \right|\left|\C \right|$. In such cases, for large enough $d$, the first term in the equation for $C_d$ will dominate, and therefore $C_d \approx C_{\alpha}2^{\frac{\alpha-1}{2}}$.
\end{remark}

By combining Lemma \ref{lemma:L1_BO_bound} and \ref{lemma:gaussianKDEbounds} we obtain a regret bound -- the main result of this section.
\begin{theorem} \label{KDERegretBounds}
For a prior $f(\theta)$ over a bounded parametric space $\Theta$ that satisfies Assumptions \ref{less than inf} and \ref{Holder}, and a Gaussian KDE with optimal bandwidth, we have that with probability at least $1-1/n$:
$$
     \mathcal{R}_T(\pi_{\widehat{f}_G}^*) \leq 2 C_{max} T \left| \Theta \right| C_d \cdot \left(\frac{\log{n}}{n}\right)^{\frac{\alpha}{2\alpha+d}}.
$$
\end{theorem}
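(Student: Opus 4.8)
The plan is to chain the two preceding lemmas, since together they already furnish every ingredient of the bound. First I would observe that the Gaussian KDE $\widehat{f}_G$ is itself a bona fide probability density on $\mathbb{R}^d$: it is nonnegative, and because the Gaussian kernel $K$ is a probability measure with $\det(\mathbf{H_0})=1$, the normalizing factor $1/(n h^d)$ ensures $\int \widehat{f}_G = 1$. Hence $\widehat{f}_G \in \probset(\mathbb{R}^d)$, which is exactly the hypothesis needed to instantiate the estimator in Lemma \ref{lemma:L1_BO_bound}.

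Next I would apply the $\norm{\cdot}_\infty$ branch of Lemma \ref{lemma:L1_BO_bound} with $\widehat{f} = \widehat{f}_G$. Since $\Theta$ is bounded with finite volume $|\Theta|$ and the per-step costs are bounded by $C_{max}$, this yields the deterministic inequality $\mathcal{R}(\pi_{\widehat{f}_G}^*) \leq 2 C_{max} T |\Theta| \norm{f - \widehat{f}_G}_\infty$. It then remains to control $\norm{f - \widehat{f}_G}_\infty = \sup_{x}\, |\widehat{f}_G(x) - f(x)|$, and this is precisely the quantity bounded in Lemma \ref{lemma:gaussianKDEbounds}: under Assumptions \ref{less than inf} and \ref{Holder}, with the optimal bandwidth of Lemma \ref{lemma:OptimalBandwidth}, we have $\sup_x |\widehat{f}_G(x) - f(x)| \leq C_d (\log n / n)^{\alpha/(2\alpha + d)}$ with probability at least $1-1/n$. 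Substituting this estimate into the regret inequality gives the claimed bound. Because the regret inequality is deterministic while the KDE error bound carries the $1-1/n$ confidence, the final statement inherits exactly that probability, with no need for a union bound.

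The proof is short because the heavy lifting was already done in the two lemmas; the only genuine points of care are checking that $\widehat{f}_G$ really lies in $\probset(\mathbb{R}^d)$ so that Lemma \ref{lemma:L1_BO_bound} applies verbatim, and confirming that the hypotheses of the two lemmas are mutually compatible, namely bounded costs and finite volume $|\Theta|$ for the regret lemma, and boundedness and H\"older continuity of $f$ for the KDE lemma. I do not expect any step here to be a real obstacle; insofar as there is difficulty, it lies upstream in Lemma \ref{lemma:gaussianKDEbounds}, where one must verify that $h^*$ is an admissible bandwidth satisfying $h > (\log n / n)^{1/d}$ so that Theorem \ref{theorem:KDEorigBounds} can legitimately be invoked and its constants specialized to the Gaussian case.
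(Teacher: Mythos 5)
Your proposal is exactly the paper's argument: the paper proves Theorem \ref{KDERegretBounds} precisely by chaining the $\norm{\cdot}_\infty$ branch of Lemma \ref{lemma:L1_BO_bound} with the high-probability KDE error bound of Lemma \ref{lemma:gaussianKDEbounds} under the optimal bandwidth of Lemma \ref{lemma:OptimalBandwidth}. The one subtlety you pass over is that the step $\norm{f-\widehat{f}_G}_1 \leq |\Theta|\,\norm{f-\widehat{f}_G}_\infty$ implicitly requires the estimator's mass to live on $\Theta$, whereas the Gaussian KDE has tails outside it --- the paper patches this by truncating the estimate to $\Theta$ (Remark \ref{remark:truncatedremark}), which preserves the bound up to the stated constants.
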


\begin{remark} \label{remark:truncatedremark}
While Theorem \ref{KDERegretBounds} assumes a bounded parametric space, the resulting KDE estimate is not necessarily bounded (e.g., when using a Gaussian kernel). In Section \ref{proof:truncatedremark} of the supplementary, we show that the result of Theorem \ref{KDERegretBounds} also holds when truncating the KDE estimate to a support $\Theta$.
\end{remark}

We next compare Theorem \ref{KDERegretBounds} with the the bounds of \citet{RegBounds}. We recall that \cite{RegBounds} considered a model-free approach that learns a history-dependent policy $\widehat{\pi}_{reg}^{*}$ on the the training domains, with $L_2$ policy regularization. 
\citet{RegBounds} only considered finite state, action, and cost spaces, a setting equivalent to our tabular representation in Example \ref{example:tabular}, where the parametric space dimension is $d=\left|\s \right|^2 \left|\A \right| + \left|\s \right|\left|\A \right|\left|\C \right|$. Corollary 1 in \cite{RegBounds} shows that with probability at least $1-1/n$, $
    \mathcal{R}_{T}\left(\widehat{\pi}^{*}\right) \leq 2 \sqrt{\aaaiparam}\cdot n^{-\frac{1}{4}} T+2 \sqrt{\aaaiparam}n^{-\frac{3}{4}}+\left(\frac{4 \sqrt{\aaaiparam}}{n^{-\frac{1}{4}}}+3 C_{\max } T\right) \left(\frac{\log{n}}{2 n}\right)^{\frac{1}{2}}
$, where $\aaaiparam=2 q^{2T} C_{\max }^{2} T^{2}|\A|$, and $q=\sup _{M, M^{\prime} \in \M, s, s^{\prime} \in \s, a \in \A, c \in \mathcal{C}} \nicefrac{P_{M}\left(s^{\prime}, c \mid s, a\right)}{P_{M^{\prime}}\left(s^{\prime}, c \mid s, a\right)}$.\footnote{The bound in \cite{RegBounds} also has a term $\lambda$ for the weight of the $L_2$ regularization term. Here, we present the result for the optimal $\lambda$, which can be calculated similarly to the optimal bandwidth in Lemma \ref{lemma:OptimalBandwidth}.}\\
At first sight, the exponent $\frac{\alpha}{2\alpha+d}$ in our bound compared to the $\frac{1}{4}$ in \cite{RegBounds} is significantly worse. The intuitive reason for this is that Theorem \ref{KDERegretBounds} builds on KDE, where it is natural to expect an exponential dependence on the dimension $d$ when estimating the density $f(\theta)$.
Yet, one must also consider the constants. Assuming that $q$ is finite places a severe constraint on the space of possible MDPs -- essentially, each cost and transition must be possible in all MDPs in the prior! \citet{RegBounds} claim that $q$ may be made finite by adding small noise to every transition, but in this case $q$ becomes $O(|\s||\C|)$, leading to an exponential $O((|\s||\C|)^{T})$ term in the bound of \cite{RegBounds}, while our bound scales linearly with $T$. The intuitive reason for this exponential dependence on $T$ is that when learning a history-dependent policy, the space of possible histories grows exponentially with $T$.

To summarize, for a small $T$, and when the structure of $\M$ is such that $q$ is finite, the model-free approach of \cite{RegBounds} seems preferable. However, when the dimension $d$ is small, and $T$ is large, our model based approach has the upper hand.

\begin{remark} \label{remark:FiniteMDP}
\citet{RegBounds} also considered a case where $\M$ is a finite set of size $\left| \M \right|$, and in this case, their
Corollary 2 shows that for $0<\alpha<1$, with probability $1-\nicefrac{1}{n^{\alpha}}$, 
\begin{small}
$
\mathcal{R}_{T}\left(\hat{\pi}^{*}\right) \leq \left(2+\sqrt{\frac{48 C_{max}^{3}|\A|}{2 P_{\min }}}\right) T^{\frac{4}{3}} n^{-\frac{1-\alpha}{3}},
$
\end{small}
where $P_{min} = \min_{M\in \M} P(M)$.\footnote{We present the result of \cite{RegBounds} for the optimal regularization coefficient, and ignore a subleading $n^{-\frac{1-a}{2}}$ term.}
In the case of a discrete and finite parametric space there is no need for the KDE, but we can still use our model based approach, and estimate the prior using the empirical distribution $\widehat{P}_{emp}(M) = \widehat{n}(M)/n$, $\forall M\in \M$, where $\widehat{n}(M)$ is the number of occurrences of $M$ in the training set. We can bound the $L_1$ error of this estimator using the Bretagnolle-Huber-Carol inequality~\cite{vaart1996weak} (the full proof is outlined in Section \ref{proof:FiniteMDP} in the supplementary), and obtain that with probability at least $1-\nicefrac{1}{n^{\alpha}}$, we have that
$
 \mathcal{R}_T(\pi_{\widehat{P}_{emp}}^*) \leq 2C_{max}T\sqrt{2\left(\alpha\log{\left(n\log{2}\right)} +|\M|+1\right)/{n}}.
$
Observe that the dependence on $T$ and $C_{max}$ in this bound is better than in the bound of \cite{RegBounds}, and we do not have the $P_{min}$ term in the denominator, which could be very small if some MDPs in the prior are rare, and is at most $\nicefrac{1}{\left| \M \right|}$. Ignoring the $\log$ term, our dependence on $n$ is also better for every $\alpha>0$.
\end{remark}

We conclude this section by pointing out that, as exemplified in Remark \ref{remark:FiniteMDP}, our approach can be generalized to density estimation techniques beyond KDE, so long as their error can be bounded.

\subsection{Bounds for Parametric Spaces with Low Dimensional Structure} \label{sec:PCA}

The sample complexity in the general bound of Theorem \ref{KDERegretBounds} grows exponentially with the dimension of the parameter space $\Theta$. 
In many practical cases however, such as the HalfCircle domain of Example \ref{example:halfcircleparametric}, there may be a low dimensional representation that encodes most of the important information in the tasks, even though the dimensionality of the parametric space is higher. In such cases, we expect that our bounds can be improved to depend on the \textit{low dimensional} representation. In the following, we approach this task by combining our model-based approach with the PCA dimensionality reduction method. PCA is a linear method, and allows for a relatively simple analysis to demonstrate our claim.
In practice, a non-linear method may be preferred. In Section \ref{section:experiments}, we verify empirically that our approach also works with non-linear deep neural network based dimensionality reduction.

We propose the following procedure.
\begin{enumerate}
\itemsep=0pt
    \item Reduce the training set dimensionality from $d$ to $d'$ using PCA
    \item Perform a Gaussian KDE estimation with optimal bandwidth in the low dimension $d'$
    \item Project back the estimated distribution to dimension $d$, to obtain the prior estimator $\widehat{f}_{G}^{d'}$
\end{enumerate} 

The main difficulty in analysing this procedure, however, is calculating the error of the estimated distribution in dimension $d$, that is, \textit{after the projection step}. We remark that projecting back is necessary, as calculating the estimated Bayes optimal policy  $\pi_{\widehat{f}}^*$ requires $g(\theta)$, where $\theta$ is in dimension $d$.
To set the stage, we first need to define the projection step explicitly. 

The PCA dimensionality reduction can be written as  $\widehat{P}_{d^{\prime}} = W_L^T\cdot W_L$, where $W_L\in \mathbb{R}^{d' \times d}$. For any $\theta\in \Theta$, we therefore have that $\theta_L = W_L \cdot \theta$ is the low dimensional representation of $\theta$, and $W_L^T \cdot \theta_L$ is the projection of $\theta_L$ back to the $d$-dimensional space.
For each $\theta_L$, we denote its inverse image as follows: $\Theta_L^{\perp}(\theta_L) = \left\{ \theta \in \Theta: W_L \cdot \theta = \theta_L\right\}$. By the law of total probability, the probability distribution of a low-dimensional $\theta_L$ is: $f_{\theta_L}(\theta_L) = \int_{\theta_L^{\perp} \in \Theta_L^{\perp}(\theta_L)} f_\theta(\theta_L^{\perp})d\theta_L^{\perp}$.

For the proceeding analysis, we require the function that maps parameters to MDPs to be smooth.
\begin{assumption} \label{assumption:StateCostAssumption}
In the MDP space $\M$, only $P$ and $C$ can differ. Furthermore, the parametric mapping is Lipshitz continuous with respect to $P_{M}\left(\cdot, \cdot \mid s, a\right)$, i.e: $\exists C_g$ such that $\forall s\in \s, a\in \A$,
$
\norm{P_{M=g(\theta_1)}\left(\cdot,\cdot \mid s, a\right)-P_{M=g(\theta_2)}\left(\cdot, \cdot \mid s, a\right)}_1 \leq C_g\norm{\theta_1-\theta_2}_1.
$
\end{assumption}

We now extend the well known \textit{Simulation Lemma} \cite{SimLemma} to the case of a history-dependent policy. This will allow us later to relate the error in the prior distribution to the error of the policy.
\begin{lemma} \label{lemma:LipCont}
For any history-dependent policy $\pi$ and any parametric mapping $g$ that satisfies Assumption \ref{assumption:StateCostAssumption}, the following holds for any $\theta_1,\theta_2 \in \Theta$:
\begin{equation}\label{eq:hist_sim_lemma}
 \left| L_{M=g(\theta_1), \pi}-L_{M=g(\theta_2), \pi} \right| \leq C_{max}C_g\norm{\theta_1-\theta_2}_1 \cdot T^2.
\end{equation}
\end{lemma}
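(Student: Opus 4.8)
The plan is to bound the difference in expected cumulative cost by separating the contribution of the (bounded) cost from the discrepancy between the two trajectory laws, and then to control that discrepancy with a telescoping ``hybrid'' argument over time. Write $M_1 = g(\theta_1)$ and $M_2 = g(\theta_2)$, and let $\Pr_{\pi,M_i}$ denote the law over full trajectories $\tau = (s_0, a_0, c_0, \dots, s_{T-1}, a_{T-1}, c_{T-1})$ induced by running $\pi$ in $M_i$. Since $L_{M_i,\pi} = \mathbb{E}_{\pi,M_i}[\sum_{t=0}^{T-1} c_t]$ and $|\sum_t c_t| \le C_{max} T$ along every trajectory, the first step is the estimate
\[
  \left| L_{M_1,\pi} - L_{M_2,\pi} \right| \le C_{max}\, T \cdot \norm{\Pr_{\pi,M_1} - \Pr_{\pi,M_2}}_1 ,
\]
which already isolates one factor of $T$.

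The crux is bounding the total-variation distance between the two trajectory laws. Both laws factorize identically except for the per-step transition--cost kernel $P_{M_i}(c_t, s_{t+1}\mid s_t, a_t)$, because $P_{\mathrm{init}}$ and the policy factors $\pi(a_t\mid h_t)$ are shared. I would introduce hybrid distributions $Q^k$ that use $M_1$'s kernel for steps $0,\dots,k-1$ and $M_2$'s kernel for the remaining steps, so that $Q^T = \Pr_{\pi,M_1}$ and $Q^0 = \Pr_{\pi,M_2}$. Consecutive hybrids $Q^{k+1}$ and $Q^k$ differ only at step $k$; factoring out the common prefix (a genuine probability distribution over $(h_k, a_k)$ under $M_1$) and summing the common suffix to $1$ yields
\[
  \norm{Q^{k+1} - Q^k}_1 = \mathbb{E}_{(s_k,a_k)}\!\big[\, \norm{P_{M_1}(\cdot,\cdot\mid s_k,a_k) - P_{M_2}(\cdot,\cdot\mid s_k,a_k)}_1 \big].
\]
Applying Assumption \ref{assumption:StateCostAssumption}, which bounds this per-step $L_1$ distance by $C_g\norm{\theta_1-\theta_2}_1$ \emph{uniformly} in $(s,a)$, makes the measure over $(s_k,a_k)$ irrelevant, so each summand is at most $C_g\norm{\theta_1-\theta_2}_1$. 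The triangle inequality over $k=0,\dots,T-1$ then gives $\norm{\Pr_{\pi,M_1}-\Pr_{\pi,M_2}}_1 \le T\, C_g \norm{\theta_1-\theta_2}_1$, supplying the second factor of $T$ and establishing \eqref{eq:hist_sim_lemma} when combined with the first display.

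The main obstacle is that, unlike the classical Simulation Lemma, the policy is history-dependent, so the usual Bellman-recursion proof via stationary value functions does not apply; this is precisely why I work directly with trajectory laws, where the shared policy factors cancel in every hybrid step regardless of how they depend on $h_t$, and why the uniform-in-$(s,a)$ form of Assumption \ref{assumption:StateCostAssumption} is essential --- it lets the argument go through without ever tracking the (hybrid-dependent) state--action occupancy measures. A secondary point to verify is the episode reset every $H$ steps: at a reset step the kernel is $P_{\mathrm{init}}(s_{t+1})\,C_{M_i}(c_t\mid s_t,a_t)$, and since $P_{\mathrm{init}}$ is shared across both MDPs (Assumption \ref{assumption:StateCostAssumption} allows only $P$ and $C$ to differ), its contribution to the per-step $L_1$ distance reduces to $\norm{C_{M_1}-C_{M_2}}_1 \le \norm{P_{M_1}(\cdot,\cdot\mid s,a)-P_{M_2}(\cdot,\cdot\mid s,a)}_1$ by marginalizing out $s'$, so reset steps obey the same per-step bound and the telescoping argument is unaffected.
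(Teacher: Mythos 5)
Your proof is correct, but it takes a genuinely different route from the paper's. The paper proves the lemma as a backward induction on history-dependent value functions: it defines $V_{t,M}^{\pi}(h_t)$, splits the one-step Bellman difference into a cost term and a transition term, uses $V_{t+1,M}^{\pi}\leq (T-1)C_{max}$ to obtain the recursion $\norm{V_{t,M}^{\pi}-V_{t,M'}^{\pi}}_\infty \leq TC_{max}\epsilon + \norm{V_{t+1,M}^{\pi}-V_{t+1,M'}^{\pi}}_\infty$ with $\epsilon = \sup_{s,a}\norm{P_M(\cdot,\cdot\mid s,a)-P_{M'}(\cdot,\cdot\mid s,a)}_1$, and unrolls it over $T$ steps. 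You instead work directly with the trajectory laws: one factor of $T$ comes from bounding the total cost by $C_{max}T$ against the $L_1$ distance between the two laws, and the second factor of $T$ comes from telescoping that distance into $T$ single-kernel swaps, each contributing at most $C_g\norm{\theta_1-\theta_2}_1$ uniformly in $(s,a)$. Both yield the same constant $C_{max}C_g T^2$. Your hybrid argument makes it especially transparent why history dependence of the policy is harmless (the policy factors are shared by consecutive hybrids and cancel, so no occupancy measures need to be tracked), and it applies verbatim to any bounded trajectory functional, not just additive costs; you also handle the episode-reset kernel explicitly via marginalization, a point the paper's recursion glosses over. The paper's value-function recursion stays closer to the classical Simulation Lemma and makes the per-step decomposition into cost error and transition error visible, which is the form reused in its discussion of why the \emph{joint} cost--transition closeness in Assumption \ref{assumption:StateCostAssumption} is needed. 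Either argument is a valid proof of the stated bound.
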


We note that Assumption \ref{assumption:StateCostAssumption} measures closeness between the \textit{joint} distribution over costs and transitions. This is slightly different than the simulation lemma for a Markov policy~\cite{SimLemma}, where only the absolute difference between the costs is required. Intuitively, this is since a history-dependent policy can depend on observed costs, therefore even if two observed costs are very close in magnitude, the actions resulting from observing them could be very different.
We note that in the case where $P$ and $C$ are bounded (which is always true when $\s$, $\A$ and $\C$ are discrete), it is enough to assume Lipschitz continuity of $g$ with respect to $P$ and $C$ separately. 
We also note that there may be parameter spaces that satisfy Eq.~\ref{eq:hist_sim_lemma} without satisfying Assumption \ref{assumption:StateCostAssumption}; in such cases our proceeding results will still hold.

In the following, we would like to formally consider MDP spaces with a low-dimensional structure. This is captured by assuming that the MDPs essentially lie on a linear subspace of dimension $d'$, such that the magnitude of the variability of MDPs outside this subspace is bounded by $\epsilon$.

\begin{assumption} \label{DecayingEigen}
There exist $d'\leq d$ and  $\epsilon\in \mathbb{R}_{\geq 0}$ such that $\lambda_{d'+1}\leq \epsilon$, where $\lambda_{i}$, as defined in Section \ref{section:PCAback}, is the $i$'th largest eigenvalue of the Covariance matrix of $\theta$.
\end{assumption}

Under Assumption \ref{DecayingEigen}, and using the PCA bounds of \citet{PCAbounds} (cf. Theorem \ref{theorem:PCABound}), we can bound the dimensionality reduction error due to performing PCA on \textit{the sampled} MDPs in our data, as given by the following lemma.

\begin{lemma} \label{lemma:decayingeigen}
Under Assumptions \ref{assumption:subgauss} and \ref{DecayingEigen}, we have that:
\begin{center}
$
\mathbb{E} \left[R\left(\widehat{P}_{d'}\right)\right] \leq \min \left(\frac{8C_{sg}^2 \sqrt{d'} tr(\Sigma)}{\sqrt{n}}, \frac{64 C_{sg}^4 tr^{2}(\Sigma)}{n\left(\lambda_{d'}-\lambda_{d'+1}\right)}\right) + \epsilon \cdot (d-d').
$
\end{center}
\end{lemma}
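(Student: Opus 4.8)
The plan is to decompose the expected reconstruction error of the empirical PCA projection into an \emph{estimation} part and an \emph{irreducible} part, and bound each separately. By the definition of the PCA risk given in Section \ref{section:PCAback}, namely $\delta^{PCA}_{d'}=\mathbb{E}[R(\widehat{P}_{d'})]-\min_{P\in \mathcal{P}_{d'}} R(P)$, we immediately have the identity $\mathbb{E}[R(\widehat{P}_{d'})] = \delta^{PCA}_{d'} + \min_{P\in \mathcal{P}_{d'}} R(P)$. Thus it suffices to control the estimation error $\delta^{PCA}_{d'}$ and the population-optimal projection error on their own.

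For the first term, I would invoke Theorem \ref{theorem:PCABound} verbatim. Since the parameter $\theta$ (playing the role of $X$) satisfies the sub-Gaussian Assumption \ref{assumption:subgauss}, the risk $\delta^{PCA}_{d'}$ is bounded by exactly the $\min(\cdots)$ expression appearing in the statement of the lemma. No additional work is required here, so this term is matched directly.

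The remaining step is to show $\min_{P\in \mathcal{P}_{d'}} R(P)\le \epsilon\cdot(d-d')$. First I would recall the classical characterization of the optimal rank-$d'$ orthogonal projection: the minimizer of $R(P)=\mathbb{E}[\norm{X-PX}^2]$ over $\mathcal{P}_{d'}$ is the projection onto the top $d'$ eigenvectors of $\Sigma$, and the minimum value equals the tail sum of eigenvalues, $\min_{P\in \mathcal{P}_{d'}} R(P)=\sum_{i=d'+1}^{d}\lambda_i$. Given this, the bound follows immediately from Assumption \ref{DecayingEigen}: since the eigenvalues are sorted non-increasingly, $\lambda_i\le \lambda_{d'+1}\le \epsilon$ for every $i\ge d'+1$, and there are exactly $d-d'$ such terms, so $\sum_{i=d'+1}^{d}\lambda_i\le \epsilon\,(d-d')$. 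Adding the two bounds gives the claim.

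The proof is essentially bookkeeping, and the only point requiring care is the identity $\min_{P\in\mathcal{P}_{d'}} R(P)=\sum_{i>d'}\lambda_i$, which is a standard Eckart--Young / Courant--Fischer argument but should be stated or cited precisely. In particular, since $R(P)$ is defined through the (possibly non-centered) second moment $\mathbb{E}[\norm{X-PX}^2]$ rather than the variance, I would make sure the eigenvalues $\lambda_i$ in Assumption \ref{DecayingEigen} refer to the same matrix $\Sigma$ used in Theorem \ref{theorem:PCABound}, keeping the convention of Section \ref{section:PCAback} consistent throughout; with that alignment, both bounds refer to the same spectrum and combine cleanly.
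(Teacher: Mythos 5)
Your argument is correct and follows essentially the same route as the paper's proof: decompose $\mathbb{E}[R(\widehat{P}_{d'})]$ into $\delta^{PCA}_{d'}$ plus $\min_{P}R(P)=\sum_{i>d'}\lambda_i$, bound the former by Theorem \ref{theorem:PCABound} and the latter by $\epsilon(d-d')$ via Assumption \ref{DecayingEigen}. Your added remark about making sure the $\lambda_i$ refer to the same matrix $\Sigma$ throughout is a reasonable point of care, but there is no substantive difference from the paper's argument.
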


There are two terms to the bound in Lemma \ref{lemma:decayingeigen}. The first is due to the error in performing PCA with a finite number of samples, and decays as $n$ increases. The second is due to the fact that even for a perfect PCA there is some error, as the MDPs do not lie perfectly in a low dimensional subspace. 

We next assume 
that after the PCA projection, the distribution remains H\"{o}lder continuous.
\begin{assumption} \label{assumption:LowDimHolder}
$f_{\theta_L}$ is $\alpha'$-H\"{o}lder continuous, meaning there exists  $0<\alpha' \leq 1$, $C_{\alpha'}>0$, such that $\left|f(\theta_L)-f\left(\theta_L'\right)\right| \leq$ $C_{\alpha'}\norm{\theta_L-\theta_L'}^{\alpha'}$, $\forall \theta_L, \theta'_L \in \Theta_L$.
\end{assumption}

\begin{theorem} \label{PCAKDEBounds}
Let $\widehat{f}_{G}^{d'}$ be the approximation of $f$ as defined above. Under Assumptions \ref{less than inf}, \ref{assumption:subgauss}, \ref{DecayingEigen}, and \ref{assumption:LowDimHolder} we have with probability at least $1-1/n$:

\begin{small}
$
\mathcal{R}_T\left(\pi^*_{\widehat{f}_{G}^{d'}}\right) \leq 2C_{max}T\left|\Theta_L\right| C_{d'} \left(\frac{\log{n}}{n}\right)^{\frac{\alpha'}{2\alpha'+d'}} + 2C_{max}T^2 C_g \sqrt{\min \left(\frac{C'_{sg} \sqrt{d'}}{\sqrt{n}}, \frac{C_{sg}^{\prime 2} }{n\Delta_{\lambda, d'}}\right) + \epsilon  (d-d')},
$
\end{small}
where 
\begin{small}
$C_{d'} = C_{\alpha'}2^{\frac{\alpha'-1}{2}} + \frac{16d' \sqrt{C_{\alpha'}\Delta_{max}^{\alpha'}\left(\Theta_L\right) +\frac{1}{\left| \Theta_L \right|}}}{\sqrt{2}\left(2\pi\right)^{\frac{d'}{4}}}+\frac{64d'^2}{(2\pi)^{\frac{d'}{2}}}$, $C'_{sg}=8C_{sg}^2 tr(\Sigma)$, and $\Delta_{\lambda, d'} = \lambda_{d'}-\lambda_{d'+1}$.
\end{small}
\end{theorem}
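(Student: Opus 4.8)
The plan is to bound the regret by the total error of the estimated prior $\widehat{f}_G^{d'}$, and to split that error into a \emph{density estimation} part living in the $d'$-dimensional subspace and a \emph{dimensionality reduction} part. First I would reuse the argument behind Lemma~\ref{lemma:L1_BO_bound}: adding and subtracting $\mathcal{L}_{\widehat{f}_G^{d'}}$ evaluated at both $\pi^*_{\widehat{f}_G^{d'}}$ and $\pi_{\mathrm{BO}}$, and discarding the nonpositive middle bracket by optimality of $\pi^*_{\widehat{f}_G^{d'}}$ for $\widehat{f}_G^{d'}$, yields $\mathcal{R}_T(\pi^*_{\widehat{f}_G^{d'}}) \le 2\sup_\pi |\mathcal{L}_f(\pi)-\mathcal{L}_{\widehat{f}_G^{d'}}(\pi)|$. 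I would then insert the \emph{projected prior} $\bar f$, defined as the pushforward of $f$ under the PCA map $\widehat{P}_{d'}=W_L^\top W_L$, and use the triangle inequality $\sup_\pi|\mathcal{L}_f-\mathcal{L}_{\widehat{f}_G^{d'}}| \le \sup_\pi|\mathcal{L}_f-\mathcal{L}_{\bar f}| + \sup_\pi|\mathcal{L}_{\bar f}-\mathcal{L}_{\widehat{f}_G^{d'}}|$, which isolates the two sources of error.

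For the estimation term, both $\bar f$ and $\widehat{f}_G^{d'}$ are supported on the image of $W_L^\top$, so I would rewrite both losses in the low-dimensional coordinates $\theta_L=W_L\theta$, giving $\mathcal{L}_{\bar f}(\pi)=\mathbb{E}_{\theta_L\sim f_{\theta_L}}[L_{g(W_L^\top\theta_L),\pi}]$ and $\mathcal{L}_{\widehat{f}_G^{d'}}(\pi)=\mathbb{E}_{\theta_L\sim\widehat{f}_{\theta_L}}[L_{g(W_L^\top\theta_L),\pi}]$, where $\widehat{f}_{\theta_L}$ is the Gaussian KDE run in dimension $d'$ on the projected data and $f_{\theta_L}$ is the true marginal. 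Bounding the total cost by $C_{max}T$ exactly as in Lemma~\ref{lemma:L1_BO_bound} gives $\sup_\pi|\mathcal{L}_{\bar f}-\mathcal{L}_{\widehat{f}_G^{d'}}|\le C_{max}T\,\norm{f_{\theta_L}-\widehat{f}_{\theta_L}}_1 \le C_{max}T\,|\Theta_L|\,\norm{f_{\theta_L}-\widehat{f}_{\theta_L}}_\infty$, and I would close this with Lemma~\ref{lemma:gaussianKDEbounds} applied in dimension $d'$ (with $\alpha'$, $C_{\alpha'}$, $\Theta_L$), producing the first term. This step requires checking that $f_{\theta_L}$ inherits the KDE hypotheses: boundedness should follow from Assumption~\ref{less than inf} together with the finite fiber volume of the bounded $\Theta$, and H\"older continuity is exactly Assumption~\ref{assumption:LowDimHolder}.

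For the reduction term I would apply the history-dependent simulation lemma pointwise. Since $\mathcal{L}_f(\pi)-\mathcal{L}_{\bar f}(\pi)=\mathbb{E}_{\theta\sim f}[L_{g(\theta),\pi}-L_{g(\widehat{P}_{d'}\theta),\pi}]$, Lemma~\ref{lemma:LipCont} gives the pointwise bound $C_{max}C_g T^2\norm{\theta-\widehat{P}_{d'}\theta}_1$, so $\sup_\pi|\mathcal{L}_f-\mathcal{L}_{\bar f}|\le C_{max}C_gT^2\,\mathbb{E}_{\theta\sim f}\norm{\theta-\widehat{P}_{d'}\theta}_1$. A Cauchy--Schwarz/Jensen step then converts the $L_1$ residual into the Euclidean projection risk, $\mathbb{E}_{\theta}\norm{\theta-\widehat{P}_{d'}\theta}_1\le \sqrt{d-d'}\sqrt{R(\widehat{P}_{d'})}$ (the $\sqrt{d-d'}$ factor being folded into constants), and Lemma~\ref{lemma:decayingeigen} bounds $\mathbb{E}[R(\widehat{P}_{d'})]$ by the $\min(\cdot,\cdot)+\epsilon(d-d')$ expression under Assumptions~\ref{assumption:subgauss} and \ref{DecayingEigen}, producing the second term.

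The main obstacle I anticipate is that the PCA map $W_L$ is estimated from the very samples that are then fed to the KDE, so the projected points $\{W_L\theta_i\}$ are not i.i.d.\ draws from a fixed marginal, and moreover the target $f_{\theta_L}$ and the constants $|\Theta_L|,\Delta_{max}(\Theta_L),C_{\alpha'}$ are themselves random through $W_L$. To invoke Lemma~\ref{lemma:gaussianKDEbounds} legitimately I would condition on $W_L$ (or split the data, using one part to fit $W_L$ and the remainder for the KDE, which costs only a constant factor in $n$), so that conditionally the KDE samples are i.i.d.\ from the fixed marginal. A secondary difficulty is reconciling the probability statements: the KDE bound holds with probability $1-1/n$ while the PCA bound in Lemma~\ref{lemma:decayingeigen} is in expectation, so the reduction term must be controlled either in expectation on the high-probability KDE event or converted via Markov's inequality, and the $L_1$-versus-$L_2$ norm conversion tracked so that the stated constants line up.
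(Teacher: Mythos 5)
Your proposal follows essentially the same route as the paper's proof: the same add-and-subtract decomposition into a projection-error term (handled pointwise by the history-dependent simulation Lemma~\ref{lemma:LipCont}, Jensen's inequality, and the PCA risk bound of Lemma~\ref{lemma:decayingeigen}) and a KDE estimation term in dimension $d'$ (handled by Lemma~\ref{lemma:gaussianKDEbounds} applied to the marginal $f_{\theta_L}$), combined via the Lemma~\ref{lemma:L1_BO_bound} argument. The technical caveats you flag --- the $L_1$-versus-$L_2$ conversion, the reuse of the same samples for fitting $W_L$ and the KDE, and the mismatch between the in-expectation PCA bound and the high-probability KDE bound --- are genuine, and the paper's own proof passes over them silently, so your treatment is if anything the more careful one.
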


The first term in the bound of Theorem \ref{PCAKDEBounds} is the KDE error. Note that, compared to the KDE error in Theorem \ref{KDERegretBounds}, the exponential dependence is on \textit{the low dimension} $d'$, and not on the higher dimension $d$. The second term in the bound is due to the PCA error, as discussed after Lemma \ref{lemma:decayingeigen}. This result demonstrates the potential in our approach, which accounts for structure in the parametric space.

\section{Related Work}
Meta RL has seen extensive empirical study~\cite{RL2,VariBad,MAML,rakelly2019efficient}, and the connection to Bayesian RL has been made in a series of recent works~\cite{lee2018bayesian,humplik2019meta,ortega2019meta,VariBad}. Most meta RL studies assumed infinite tasks during training (effectively drawing different random MDPs from the prior at each training iteration), with few exceptions, mostly in the offline meta RL setting~\cite{offlineBRL,li2020multitask}. 

Bayesian RL algorithms such as \cite{guez2012efficient,grover2020bayesian} sample MDPs from the true posterior, which requires knowing the true prior. 
In their comprehensive study, Simchowitz et al.~\cite{misspriors} analyse a family of posterior-sampling based algorithms, termed $n$-Monte Carlo methods, under mis-specifed priors, and bound the corresponding difference in accumulated regret. In contrast, our work considers a zero-shot setting, comparing the Bayes-optimal policies with respect to the true and the mis-specifed prior, and, more importantly, focuses on the sample complexity of \textit{obtaining} a prior that is accurate enough -- an issue that is not addressed in \cite{misspriors} for MDPs.
To the best of our knowledge, the only theoretical investigation of meta RL with finite training tasks in a similar setting to ours is the model free approach in \cite{RegBounds}, which we compare against in our work.
Our theoretical analysis builds on ideas from the study of density estimation \cite{KDE,KDEBounds} and dimensionality reduction \cite{PCAOrig,PCAbounds}. 

We note that regularization techniques inspired by the mixup method~\cite{zhang2017mixup} have been applied to meta learning~\cite{yao2021meta}, and recently also to meta RL~\cite{LDM}, with a goal of improving generalization to out-of-distribution tasks. In our experiments, we compared our approach with an approach inspired by~\cite{LDM}, and found that for in-distribution generalization, our approach worked better. That said, we believe there is much more to explore in developing effective regularization methods for meta RL. We conclude with studies on PAC-Bayes theory for meta learning~\citep{amit2018meta,rothfuss2021pacoh,farid2021pac}. These works, which have a different flavor from our PAC analysis, do not cover the meta RL problem considered here.

\section{Experiments} \label{section:experiments}
In this section we complement our theoretical results with an empirical investigation. Our goal is to show that our main idea of learning a KDE over a low dimensional space of tasks is effective also for state-of-the-art meta-RL algorithms, for which the linearity assumption of PCA clearly does not hold, and computing the optimal yet intractable $\pi_{\widehat{f}}^*$ is replaced with an approximate deep RL method.

Modern deep RL algorithms are known to be highly sensitive to many hyperparameters~\cite{henderson2018deep}, and  meta RL algorithms are not different. To demonstrate our case clearly, we chose to build on the VariBAD algorithm of \citet{VariBad}, for which we could implement our approach by replacing just a single algorithmic component, as we describe next, and thus obtain a fair comparison.

\textbf{VariBAD: }
We briefly explain the VariBAD algorithm, to set the stage for our modified algorithm to follow; we refer the reader to \cite{VariBad} for the full details. VariBAD is composed of two main components. The first is a variational autoencoder (VAE \cite{VAE}) with a recurrent neural network (RNN) encoder that, at each time step, encodes the history into a Gaussian distribution over low dimensional latent vectors. The decoder part of the VAE is trained to predict the next state and reward from the encoded history. Using the terminology in our paper, the VAE latent state can be related to the MDP parameter $\theta$, and the VAE decoder learns a model of $g(\theta)$. The second component in VariBAD is a policy, mapping the output of the VAE encoder and current state into an action. This component, which can be seen as producing an approximation of $\pi_{\widehat{f}}^*$, is trained using a policy gradient algorithm such as PPO~\cite{schulman2017proximal}.

\textbf{VariBAD Dream: }
Recall that our pipeline is to learn a KDE over the task parameters $\theta$, and then train a policy on tasks from the estimated KDE. Unfortunately, in our meta-RL setting, we do not assume that we directly know the $\theta$ representation for each task. However, the VAE in VariBAD allows for a convenient approximation. We can think of the output of the VAE encoder at the end of an episode, after a full history has been observed and the uncertainty about the task has been resolved as best as possible, as a sample of the task parameter $\theta$. Thus, we propose to build our KDE estimate over these variables. We henceforth refer to samples from the KDE as \textbf{dream environments} (as they are not present in the real data), and we note that the VAE decoder can be used to sample rewards and state transitions from these environments. Thus, we can train the VariBAD policy on both the sampled training environments, and also on dream environments. We refer to this method as \textit{VariBAD Dream}. In our implementation, we train the KDE and VariBAD components simultaneously. The full implementation details and pseudo code are in Section \ref{sec:implementation} of the supplementary.   

We emphasize that in the original VariBAD work~\cite{VariBad}, the algorithm was trained by sampling different tasks from the task distribution \textit{at each iteration}, corresponding to an infinite number of traning tasks. As we are interested in the finite task setting, our implementation of VariBAD (and VariBAD Dream) draws a single batch of $N_{train}$ tasks once, at the beginning of training, and subsequently samples tasks from within this batch for training the VAE and policy.

\textbf{Results:}
We consider the HalfCircle environment of Figure \ref{halfcircle} -- a popular task that requires learning a non-trivial Bayes-optimal policy~\cite{offlineBRL}.
In order to test generalization, we evaluated the agents on $N_{eval}=50$ sampled environments, different from the $N_{train}$ training ones. 
\begin{figure}
     \centering
     \vspace{-1em}
     \begin{subfigure}[l]{0.45\textwidth}
         \centering
         \includegraphics[width=\textwidth]{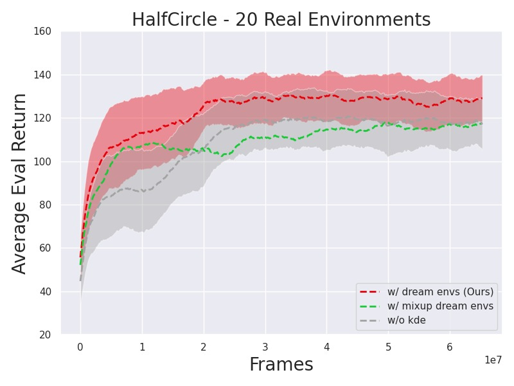}
     \end{subfigure}
     \hspace{-1em}
     \begin{subfigure}[r]{0.45\textwidth}
         \centering
         \includegraphics[width=\textwidth]{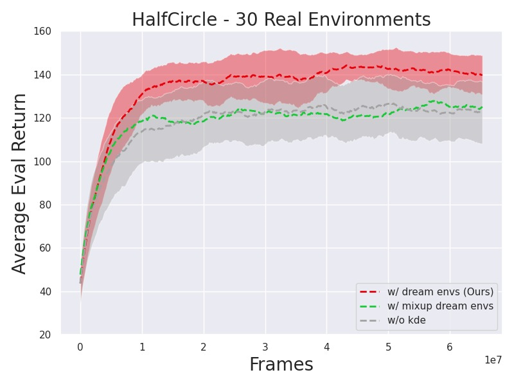}
     \end{subfigure}
     \vspace{-1em}
     \caption{Average return on HalfCircle with KDE and mixup dream environments and without dream environments. The average is shown in dashed lines, with the 95\% confidence intervals (15 random seeds). We do not show the intervals for the mixup run for visualization clarity; mixup obtained similar intervals as without dream. The full comparison is in Section \ref{sec:mixupComp} of the supplementary.}
     \label{figure:HalfCircleEval}
     \vspace{-1.5em}
\end{figure}
In Figure \ref{figure:HalfCircleEval} we show the affect of incorporating the dream environments on the average return for the evaluation environments. Evidently, our approach achieved higher return, both for $N_{train}=20$ and $N_{train}=30$. For $N_{train}=40$, the original VariBad performed near optimally, and there was no advantage for VariBad Dream to gain. For $N_{train}=10$, the sampling was too sparse, and both methods demonstrated comparable failure in generalizing to the test environments. We emphasize that while the performance advantage of VariBAD Dream is modest, it is remarkable that the KDE regularization demonstrates consistent improvement, as the VAE training and PPO optimization already include significant implicit and explicit regularization mechanisms.


Additionally, we evaluated a method inspired
by mixup~\cite{zhang2017mixup,yao2021meta,LDM}, where dream environments are created by a random weighted average of latent vectors (instead of the KDE). 
Interestingly, our KDE approach outperformed this method, which may be the result of the geometry of the task distribution -- there are no goals within the half circle, in contrast with the experiments in \cite{LDM}, where the goals were distributed \textit{inside} a rectangular area. We remark that the experiments in \cite{LDM} were designed to evaluate \textit{out-of-distribution} generalization, different from the \textit{in distribution} setting considered here.
In Sections \ref{sec:mixupComp} and \ref{sec:ant_goal_env} of the supplementary we provide additional evaluations and show similar results on a MuJoCo \cite{todorov2012mujoco} environment with a high dimensional state space.

\textbf{Synthetic Experiments with a Known Parametric Space:} 
In our theoretical analysis, we assumed knowledge of both the parameters $\theta$ of the training MDPs, and the function $g(\theta)$ that maps parameters to MDPs. We complement our empirical investigation with synthetic experiments with VariBAD Dream, where $g(\theta)$ is known.

We consider again the HalfCircle environment of Figure \ref{halfcircle}. We define the parametric space as $\Theta=\mathbb{R}^2$, where 
$g(\theta)$ 
prescribes a corresponding MDP with a goal at location $(x,y) = \theta$. For our experiment, we sample  $\{\theta_i\}_{i=1}^N$ i.i.d.~from the training environment distribution, and perform KDE directly on these samples. Our variant of VariBad Dream samples parameters from the KDE and generates dream MDPs using the known $g(\theta)$.

Figure \ref{figure:knownparams} compares VariBAD Dream with the conventional VariBad (trained only on $\{\theta_i\}_{i=1}^N$), on unseen test environments. Evidently, the use of KDE improved the generalization performances of VariBad by a large margin, especially when the number of training environments is small. Note that the improvement is starker in this synthetic setting compared to Figure \ref{figure:HalfCircleEval}, as VariBAD Dream here does not suffer from inaccuracies due to learning a model of $g(\theta)$.

\begin{figure} [h]
     \centering
     \hspace{-1em}
     \begin{subfigure}[l]{0.33\textwidth}
         \centering
         \includegraphics[width=\textwidth]{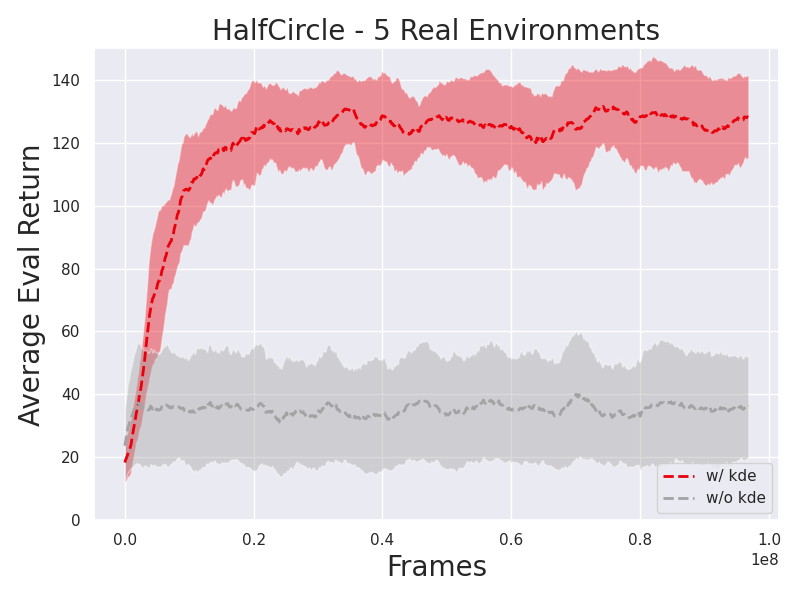}
     \end{subfigure}
     \begin{subfigure}[r]{0.33\textwidth}
         \centering
         \includegraphics[width=\textwidth]{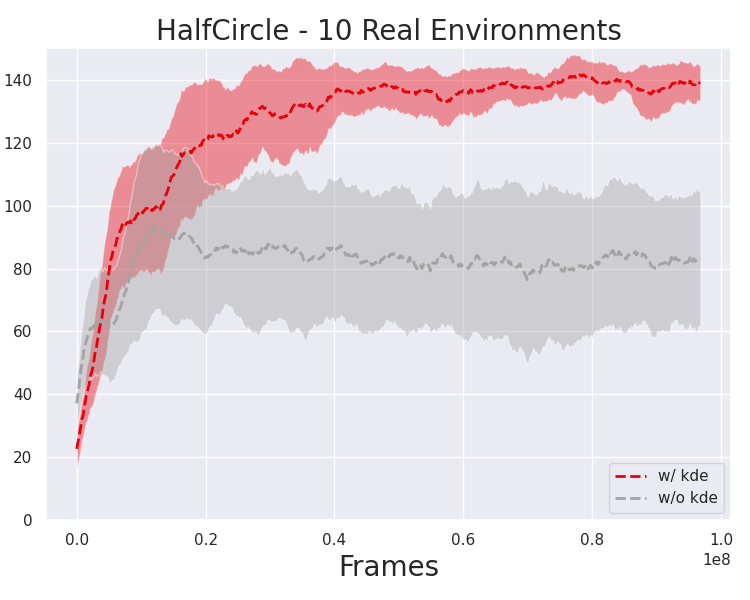}
     \end{subfigure}
     \begin{subfigure}[r]{0.33\textwidth}
         \centering
         \includegraphics[width=\textwidth]{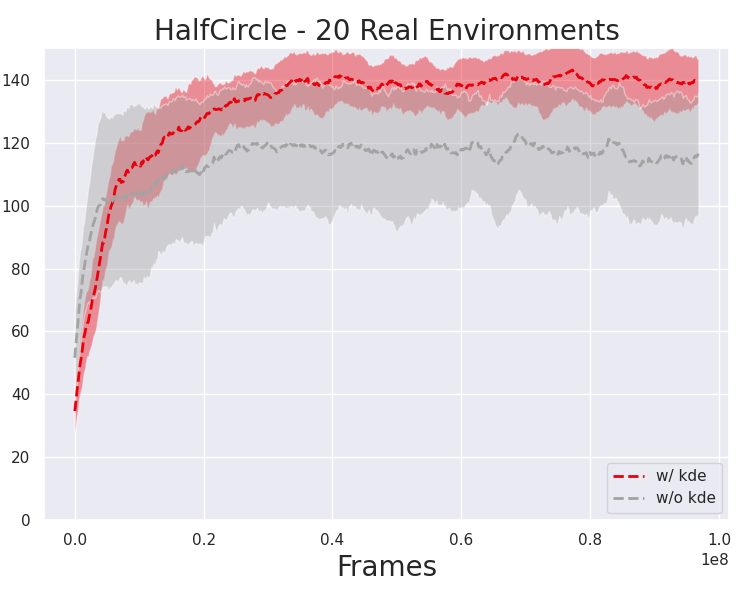}
     \end{subfigure}
     \hspace{-1em}
     \caption{Average return on half-circle of the original VariBad and the VariBAD Dream variant with access to the MDP parametric space. The average is shown in dashed lines, with the 95\% confidence intervals (using 6 random seeds).}
     \label{figure:knownparams}
     \vspace{-1em}
\end{figure}


\section{Discussion \& Future Work}
We propose a model-based scheme for meta-RL with a finite sample of training tasks, where we first estimate the prior distribution of tasks, and train a Bayes optimal policy on the estimated prior. Using KDE for density estimation, we obtain state-of-the-art PAC bounds. Further, our approach can exploit low dimensional structure of the task distribution, when such exists, to obtain improved bounds. Finally, we showed that our approach can be ``plugged-in'' the VariBAD algorithm to improve generalization. 

A key takeaway from our analysis is that the \textit{dimensionality}
of the task distribution determines, with exponential dependency, 
the amount of training samples required to act well.
This insight provides a rule-of-thumb of when meta RL approaches based on task inference, such as VariBAD, are expected to work well. Indeed, recent empirical work by Mandi et al.~\citep{finetunevsmeta} claimed that in benchmarks such as RLBench~\cite{RLBench}, where tasks are very diverse, simpler meta RL methods based on fine-tuning a policy trained on diverse tasks display state-of-the-art performance. 


There are several exciting future directions for investigation. Our theoretical analysis can be extended to more advanced density estimation and dimensionality reduction techniques, such as VAEs~\cite{rolinek2019variational}. 
Our empirical investigation hinted that regularization, such as affected by VariBAD Dream, can improve deep meta-RL algorithms. More sophisticated regularization can be developed based on prior knowledge about the possible tasks.


\section*{Acknowledgements}
This work received funding from Ford Inc., and from the European Union (ERC, Bayes-RL, Project Number 101041250). Views and opinions expressed are however those of the author(s) only and do not necessarily reflect those of the European Union or the European Research Council Executive Agency. Neither the European Union nor the granting authority can be held responsible for them. 
\newpage 
\bibliography{bib}
\bibliographystyle{plainnat}

\section*{Checklist}

The checklist follows the references.  Please
read the checklist guidelines carefully for information on how to answer these
questions.  For each question, change the default \answerTODO{} to \answerYes{},
\answerNo{}, or \answerNA{}.  You are strongly encouraged to include a {\bf
justification to your answer}, either by referencing the appropriate section of
your paper or providing a brief inline description.  For example:
\begin{itemize}
  \item Did you include the license to the code and datasets? \answerYes{See Section~}
  \item Did you include the license to the code and datasets? \answerNo{The code and the data are proprietary.}
  \item Did you include the license to the code and datasets? \answerNA{}
\end{itemize}

\begin{enumerate}

\item For all authors...
\begin{enumerate}
  \item Do the main claims made in the abstract and introduction accurately reflect the paper's contributions and scope?
    \answerYes{}
  \item Did you describe the limitations of your work?
    \answerYes{}
  \item Did you discuss any potential negative societal impacts of your work?
    \answerNA{}
  \item Have you read the ethics review guidelines and ensured that your paper conforms to them?
    \answerYes{}
\end{enumerate}

\item If you are including theoretical results...
\begin{enumerate}
  \item Did you state the full set of assumptions of all theoretical results?
    \answerYes{}
	\item Did you include complete proofs of all theoretical results?
    \answerYes{In the supplementary materials}
\end{enumerate}

\item If you ran experiments...
\begin{enumerate}
  \item Did you include the code, data, and instructions needed to reproduce the main experimental results (either in the supplemental material or as a URL)?
    \answerYes{In the supplementary materials}
  \item Did you specify all the training details (e.g., data splits, hyperparameters, how they were chosen)?
    \answerYes{In Section \ref{sec:implementation} of the supplementary}
	\item Did you report error bars (e.g., with respect to the random seed after running experiments multiple times)?
    \answerYes{In Section \ref{section:experiments} and Section \ref{sec:mixupComp} in the supplementary}
	\item Did you include the total amount of compute and the type of resources used (e.g., type of GPUs, internal cluster, or cloud provider)?
    \answerYes{See Section \ref{sec:compute} of the supplementary}
\end{enumerate}

\item If you are using existing assets (e.g., code, data, models) or curating/releasing new assets...
\begin{enumerate}
  \item If your work uses existing assets, did you cite the creators?
    \answerYes{See Section \ref{sec:implementation} of the supplementary}
  \item Did you mention the license of the assets?
    \answerYes{See Section \ref{sec:implementation} of the supplementary}
  \item Did you include any new assets either in the supplemental material or as a URL?
    \answerYes{See Section \ref{sec:implementation} of the supplementary}
  \item Did you discuss whether and how consent was obtained from people whose data you're using/curating?
    \answerNA{}
  \item Did you discuss whether the data you are using/curating contains personally identifiable information or offensive content?
    \answerNA{}
\end{enumerate}

\item If you used crowdsourcing or conducted research with human subjects...
\begin{enumerate}
  \item Did you include the full text of instructions given to participants and screenshots, if applicable?
    \answerNA{}
  \item Did you describe any potential participant risks, with links to Institutional Review Board (IRB) approvals, if applicable?
    \answerNA{}
  \item Did you include the estimated hourly wage paid to participants and the total amount spent on participant compensation?
    \answerNA{}
\end{enumerate}

\end{enumerate}


\appendix
\newpage
\section{Appendix}
\subsection{Limitations}\label{sec:limitation}
In this section we summarize several limitations of our study. 

Our analysis applies to linear dimensionality reduction (PCA), while in practice, it may be that the prior lives on a low dimensional non-linear manifold. Extending our theory for such cases would require a significantly more elaborate approach. However, our experiments show that empirically, our insights still hold when the PCA is replaced with a non-linear VAE.

As discussed in Section \ref{sec:genbounds}, the bounds achieved by our method are exponential with the underlying dimension of the task distribution. The lower-bound example in Proposition 2 of \cite{RegBounds} can be adapted to our setting (by using the Tabular Mapping in Example \ref{example:tabular}), to show a problem setting where an exponential dependence on the dimension cannot be avoided, regardless of the algorithm. Thus, without additional structure in the problem, this limitation is general to meta-RL and not specific to our method.

The proposed algorithm, VariBad Dream, builds on  top of VariBad, and requires the latent space to be learned by the VariBad algorithm. We are therefore limited to environments in which VariBad performs adequately. Furthermore, in order to create the prior estimate using KDE, we used VariBAD latents gathered from the VAE posterior at the end of a VariBad rollout. This may not work well in some cases, e.g., when the task uncertainty is not resolved at the end of the episode.  

\subsection{Regret Bounds Using Prior Estimation} \label{proof:L1_BO_bound}
\textbf{Lemma \ref{lemma:L1_BO_bound}}. Let $\widehat{f}\in \probset (\mathbb{R}^d)$ be an estimator of the real prior $f$ over the parametric space $\Theta$. We have that:
$
    \mathcal{R}(\pi_{\widehat{f}}^*) = \mathcal{L}_{f}(\pi_{\widehat{f}}^*) - \mathcal{L}_{f}(\pi_{\mathrm{BO}})\leq 2 C_{max} T \norm{f-\widehat{f}}_1,
$
and for a bounded parametric space of volume $\left|\Theta\right|$ we have:
$
    \mathcal{R}(\pi_{\widehat{f}}^*) = \mathcal{L}_{f}(\pi_{\widehat{f}}^*) - \mathcal{L}_{f}(\pi_{\mathrm{BO}})\leq 2 C_{max} T \left|\Theta \right| \norm{f-\widehat{f}}_{\infty}.
$

\begin{proof} [Proof of Lemma \ref{lemma:L1_BO_bound}]
\begin{equation*}
\begin{split}
    \mathcal{L}_{f}(\pi)-\mathcal{L}_{\widehat{f}}(\pi)&=
     \mathbb{E}_{\theta \sim f(\theta)} \mathbb{E}_{\pi, M=g(\theta)}\left[\sum_{t=0}^{T-1} c_t\right] - \mathbb{E}_{\theta \sim \widehat{f}(\theta)} \mathbb{E}_{\pi, M=g(\theta)}\left[\sum_{t=0}^{T-1} c_t \right] \\&=
    \int \mathbb{E}_{\pi, M=g(\theta)}\left[\sum_{t=0}^{T-1} c_t\right] f(\theta) d\theta - \int \mathbb{E}_{\pi, M=g(\theta)}\left[\sum_{t=0}^{T-1} c_t\right] \widehat{f}(\theta) d\theta \\&=
    \int \mathbb{E}_{\pi, M=g(\theta)}\left[\sum_{t=0}^{T-1}c_t\right] \left(f(\theta)-\widehat{f}(\theta)\right) d\theta
\end{split}
\end{equation*}

Taking the absolute value:
\begin{equation*}
\begin{split}
    \left|\mathcal{L}_{f}(\pi)-\mathcal{L}_{\widehat{f}}(\pi)\right|&=
    \left|\int \mathbb{E}_{\pi, M=g(\theta)}\left[\sum_{t=0}^{T-1} c_t\right] \left(f(\theta)-\widehat{f}(\theta)\right) d\theta\right| \\&\leq
   \int  \left|\mathbb{E}_{\pi, M=g(\theta)}\left[\sum_{t=0}^{T-1} c_t\right] \right|\left|\left(f(\theta)-\widehat{f}(\theta)\right)\right| d\theta \\&\leq C_{max} T\int \left|\left(f(\theta)-\widehat{f}(\theta)\right)\right| d\theta\leq C_{max} T \norm{f_1-f_2}_1
\end{split} 
\end{equation*}

Using the above with $A=C_{max} T \norm{f-\widehat{f}}_1$:
\begin{equation*}
\begin{split}
    \mathcal{L}_{f}(\pi_{\widehat{f}}^*)-A \leq \mathcal{L}_{\widehat{f}}(\pi_{\widehat{f}}^*) \leq& \mathcal{L}_{\widehat{f}}(\pi_{f}^*) \leq  \mathcal{L}_{f}(\pi_{f}^*)+A\\
    \Rightarrow \mathcal{L}_{f}(\pi_{\widehat{f}}^*)-A\leq& \mathcal{L}_{f}(\pi_{f}^*)+A
\end{split}
\end{equation*}
Rearranging gives the first bound. 
For a finite parametric space of size $\left|\Theta\right|$, we know that $\norm{f-\widehat{f}}_1 \leq \left|\Theta\right|\cdot \norm{f-\widehat{f}}_{\infty}$, which yields the second bound.  
\end{proof}

\subsection{Optimal KDE Bandwidth} \label{proof:OptimalBandwidth}

\textbf{Lemma \ref{lemma:OptimalBandwidth}}. The optimal KDE bandwidth is (up to a constant independent of $n$) $h^*=\left(\nicefrac{\log{n}}{n}\right)^{\frac{1}{2\alpha+d}}$

\begin{proof} [Proof of Lemma \ref{lemma:OptimalBandwidth}]

The minimum value of the function $f(x)=Ax^a+Bx^b$ with $A,a,B,b \neq 0$ is achieved with:
$$
x^* = \left(-\frac{bB}{aA}\right)^{\frac{1}{a-b}} 
$$

We can use this result to achieve the optimal bandwidth for the bound in lemma \ref{theorem:KDEorigBounds}, with: $A=C^{\prime}\sigma_{min}^{-\alpha / 2}$, $a=\alpha$, $B=C^{\prime}\sqrt{\frac{\log{n}}{n}}$ and $b=-\frac{d}{2}$, resulting with:

$$
\argmin_{h\in \mathbb{R}^+}{\norm{\widehat{f}_{\mathbf{H}}(x)-f(x)}_\infty} =  \left(\frac{d^2 \log{n}}{4 \alpha^2 n}\right)^{\frac{1}{2\alpha+d}}
$$

\end{proof}

\subsection{Gaussian KDE Bounds} \label{proof:gaussianKDEbounds}

\textbf{Lemma \ref{lemma:gaussianKDEbounds}}. Under Assumptions \ref{less than inf} and \ref{Holder}, for a parametric space with finite volume $\left| \Theta \right|$ and a KDE with a Gaussian kernel $K(u)=\frac{e^{-\frac{1}{2} u^Tu}}{\left(2\pi\right)^{\frac{d}{2}}}$, $\mathbf{H_0}=\mathbf{I}$, and an optimal bandwidth $h^*$, we have that with probability at least $1-1/n$:
\begin{small}
$
    \sup _{x \in \mathbb{R}^{d}}\left|\widehat{f}_G(x)-f(x)\right| \leq C_d \cdot \left(\frac{\log{n}}{n}\right)^{\frac{\alpha}{2\alpha+d}},
$
\end{small}
where $C_d = C_{\alpha}2^{\frac{\alpha-1}{2}} + \frac{16d \sqrt{C_{\alpha}\Delta_{max}^{\alpha}\left(\Theta\right) +\frac{1}{\left| \Theta \right|}}}{\sqrt{2}\left(2\pi\right)^{\frac{d}{4}}}+\frac{64d^2}{(2\pi)^{\frac{d}{2}}}$, and $\Delta_{max}\left(\Theta \right)$ is the maximal $L_1$ distance between any two parameters in $\Theta$.

\begin{proof}[Proof of Lemma \ref{lemma:gaussianKDEbounds}]



We follow the proof of Theorem 2 in \cite{KDEBounds}.

We define:
$$
\check{u}_{x}(r):=f(x)-\inf _{x^{\prime} \in B(x, r)} f\left(x^{\prime}\right)
$$
and
$$
\widehat{u}_{x}(r):=\sup _{x^{\prime} \in B(x, r)} f\left(x^{\prime}\right)-f(x)
$$
the following holds \cite{KDEBounds}:
$$
\int_{\mathbb{R}^{d}} K(u) \check{u}_{x}\left(\frac{h\norm{u}}{ \sqrt{\sigma_{min}}}\right) d u \leq \frac{v_{d} \cdot C_{\alpha} h^{\alpha}}{\sigma_{min}^{\alpha / 2}} \int_{0}^{\infty} k(t) t^{d+\alpha} d t
$$
and the above equation is also valid when replacing $\check{u}_{x}$ with $\widehat{u}_{x}$

We can use Theorem 1 from \cite{KDEBounds} and get:
$$
\sup _{x \in \mathbb{R}^{d}}\left|\widehat{f_{\mathbf{H}}}(x)-f(x)\right|<\epsilon h^{\alpha}+C_{\infty} \sqrt{\frac{\log n}{n \cdot h^{d}}}
$$
Where $\epsilon=\frac{v_{d} \cdot C_{\alpha}}{\sigma_{min}^{\alpha / 2}} \int_{0}^{\infty} k(t) t^{d+\alpha} d t$, and $C_{\infty} = 8 d \sqrt{v_{d} \cdot\|f\|_{\infty}}\left(\int_{0}^{\infty} k(t) \cdot t^{d / 2} d t+1\right)+64 d^{2} \cdot k(0)$.

$\kfunc$ is the function introduced in assumption \ref{assumption:SpherSymKernel} and $v_d = \frac{\pi^{d / 2}}{\Gamma(1+d / 2)}$ is the volume of the d-dimensional unit ball, where $\Gamma$ is the Gamma function. 



For KDE with the optimal bandwidth $h^*= \left(\frac{\log{n}}{n}\right)^{\frac{1}{2\alpha+d}}$, defined in lemma \ref{lemma:OptimalBandwidth} we get:
$$
\sup _{x \in \mathbb{R}^{d}}\left|\widehat{f_{\mathbf{H}}}(x)-f(x)\right|<\epsilon h^{\alpha}+C_{\infty} \sqrt{\frac{\log n}{n \cdot h^{d}}} = \left(\epsilon + C_{\infty} \right)\cdot \left(\frac{\log{n}}{n}\right)^{\frac{\alpha}{2\alpha+d}}
$$

In the case of the Gaussian kernel presented in example \ref{example:GaussianKDE}: $K(x)=\frac{e^{-\frac{1}{2} x^Tx}}{\left(2\pi\right)^{\frac{d}{2}}}$, $k(t) = \frac{e^{-\frac{1}{2}t^2}}{\sqrt{\left(2\pi\right)^d}}$ and $\sigma_{min}=1$.

The well known formula for the moments of the Gaussian distribution: 
$$
\int_{0}^{\infty} k(t) \cdot t^{a} d t = \frac{\Gamma\left(\frac{a+1}{2}\right)}{2^{\frac{d-a+1}{2}}\pi^{\frac{d}{2}}}
$$

Using the fact that $\Gamma(x)$ is monotonically increasing $\forall x>1$:

\begin{align*}
\epsilon  &=
\frac{v_{d} \cdot C_{\alpha} }{\sigma_{min}^{\alpha / 2}} \int_{0}^{\infty} k(t) t^{d+\alpha} d t=
\frac{\pi^{d / 2}}{\Gamma(1+d / 2)} \cdot C_{\alpha} \cdot \frac{\Gamma\left(\frac{d+\alpha+1}{2}\right)}{2^{\frac{1-\alpha}{2}}\pi^{\frac{d}{2}}} \\&= 
\frac{C_{\alpha}}{\Gamma(1+d / 2)\cdot 2^{\frac{1-\alpha}{2}}} \cdot \Gamma\left(\frac{d+\alpha+1}{2}\right) \\&\leq
\frac{C_{\alpha}}{\Gamma(1+d / 2)\cdot 2^{\frac{1-\alpha}{2}}} \cdot \Gamma\left(\frac{d+2}{2}\right) = C_{\alpha}2^{\frac{\alpha-1}{2}}
\end{align*}

Notice that in our case, since the function is $\alpha$-H\"{o}lder continuous and its support size is $\left| \Theta \right|$:
$$
f_{max}-f_{min}\leq C_{\alpha} \Delta_{max}^{\alpha}\left(\Theta\right) \Rightarrow \|f\|_{\infty}\leq C_{\alpha}\Delta_{max}^{\alpha}\left(\Theta\right)+\frac{1}{\left| \Theta \right|}
$$

Where $\Delta_{max}\left(\Theta\right)$ is the maximum $L_1$ distance between two parameters in $\Theta$.

Using the fact that $\sqrt{\Gamma(2x)}>\Gamma(x)$:
\begin{align*}
C_{\infty} &= 8 d \sqrt{v_{d} \cdot\|f\|_{\infty}}\left(\int_{0}^{\infty} k(t) \cdot t^{d / 2} d t+1\right)+64 d^{2} \cdot k(0) \\&= 
8 d \sqrt{\frac{\pi^{d / 2}}{\Gamma(1+d / 2)} \cdot\|f\|_{\infty}}\left(\frac{\Gamma\left(\frac{d+2}{4}\right)}{2^{\frac{d+2}{4}}\pi^{\frac{d}{2}}}+1\right)+\frac{64d^2}{\sqrt{(2\pi)^d}}\\ &\leq
16 d \sqrt{\pi^{d / 2} \cdot\|f\|_{\infty}}2^{\frac{-d-2}{4}}\pi^{\frac{-d}{2}}+\frac{64d^2}{\sqrt{(2\pi)^d}} \leq
\frac{16d \sqrt{C_{\alpha}\Delta_{max}^{\alpha}\left(\Theta\right) +\frac{1}{\left| \Theta \right|}}}{\sqrt{2}\left(2\pi\right)^{\frac{d}{4}}}+\frac{64d^2}{(2\pi)^{\frac{d}{2}}}
\end{align*}


Concluding:

\begin{align*} 
    \sup _{x \in \mathbb{R}^{d}}\left|\widehat{f_{\mathbf{H}}}(x)-f(x)\right|&<\left(\epsilon + C_{\infty} \right)\cdot \left(\frac{\log{n}}{n}\right)^{\frac{\alpha}{2\alpha+d}}\\&\leq\left(C_{\alpha}2^{\frac{\alpha-1}{2}} + \frac{16d \sqrt{C_{\alpha}\Delta_{max}^{\alpha}\left(\Theta\right) +\frac{1}{\left| \Theta \right|}}}{\sqrt{2}\left(2\pi\right)^{\frac{d}{4}}}+\frac{64d^2}{(2\pi)^{\frac{d}{2}}} \right)\cdot \left(\frac{\log{n}}{n}\right)^{\frac{\alpha}{2\alpha+d}}
\end{align*}
\end{proof}

\subsection{Bounds for a Truncated Estimator}\label{proof:truncatedremark}

\textbf{Remark \ref{remark:truncatedremark}}. The result of Theorem \ref{KDERegretBounds} also holds when truncating the KDE estimate to a support $\Theta$.

\begin{proof} [Proof of Remark \ref{remark:truncatedremark}]

Let $f_{1}\in \probset(\Theta)$ be a PDF where $\Theta$ is of finite size $\left|\Theta \right|$ and dimension $d$ and $f_{2} \in \probset(\mathbb{R}^d)$ another PDF such that $\norm{f_1-f_2}_\infty \leq U$ (where $U<1$). So:
$$
\norm{f_1-f^T_2}_\infty \leq \frac{\left(\left|\Theta \right|+1\right)\cdot U}{1- \left|\Theta \right| U}
$$
Where $f^T_2$ is the truncated version of $f_2$ ($f^T_2(\theta)=\frac{f_2(\theta)}{\int_{\theta \in \Theta} f_2(\theta)d\theta}$ for $\theta \in \Theta$, else 0)
Let $r = \int_{\theta \in \Theta} f_2(\theta)d\theta$, we can bound $1-r$:
$$
1-r = \int_{\theta\in \Theta} \left( f_1(\theta) - f_2(\theta) \right) d\theta \leq
\int_{\theta\in \Theta} \left|f_1(z) - f_2(z)\right| dz \leq \left|\Theta \right| \cdot \norm{f_1-f_2}_\infty
$$

So:
\begin{align*}
    \norm{f_1-f^T_2}_\infty &\leq \norm{f_1-\frac{f_2}{r}}_\infty=\frac{1}{r}\cdot \norm{f_2-r\cdot f_1}_\infty= \frac{1}{r}\cdot \norm{f_2-f1+\left(1-r\right)\cdot f_1}_\infty 
    \\&\leq \frac{1}{r}\left(\norm{f_2-f_1}_\infty+\norm{\left(1-r\right)\cdot f_1}_\infty \right)
\end{align*}

Concluding:
$$
    \norm{f_1-f^T_2}_\infty \leq \frac{1}{1-\left|\Theta \right|  U}\left(U+\left|\Theta \right| \cdot U \right) = \frac{\left(1+\left|\Theta \right| \right)\cdot U}{1-\left|\Theta \right| U}
$$
\end{proof}

\subsection{Bounds for Discrete and Finite Parametric Space}

\textbf{Remark \ref{remark:FiniteMDP}}. In the case of a discrete and finite parametric space there is no need for the KDE, but we can still use our model based approach, and estimate the prior using the empirical distribution $\widehat{P}_{emp}(M) = \widehat{n}(M)/n$, $\forall M\in \M$, where $\widehat{n}(M)$ is the number of occurrences of $M$ in the training set. We can bound the $L_1$ error of this estimator using the Bretagnolle-Huber-Carol inequality~\cite{vaart1996weak}, and achieve that with probability at least $1-\nicefrac{1}{n^{\alpha}}$ we have that,
$
 \mathcal{R}_T(\pi_{\widehat{P}_{emp}}^*) \leq 2C_{max}T\sqrt{2\left(\alpha\log{\left(n\log{2}\right)} +|\M|+1\right)/{n}}.
$

\begin{proof} [Proof of Remark \ref{remark:FiniteMDP}] \label{proof:FiniteMDP}
$$
\operatorname{Pr}\left[\sum_{i=1}^{|\M|}\left|\frac{\hat{n}_{i}}{n}-p_{i}\right| \geq \lambda\right] \leq 2^{|\M|+1} e^{-n \lambda^{2} / 2}
$$

\begin{align*}
    2^{|\M|+1} e^{-n \lambda^{2} / 2} = n^{-\alpha}
\end{align*}

$$
\frac{1}{\log{(2)}} e^{|\M|+1-\frac{n \lambda^{2}}{2}} = n^{-\alpha}
$$

$$
 |\M|+1-n \lambda^{2} / 2 = \log{\left(\log{(2)} n^{-\alpha}\right)}
$$

$$
 n \lambda^{2} / 2 = \alpha \log{\left(\log{(2)} n\right)}+|\M|+1
$$

$$
 \lambda = \sqrt{\left(\alpha \log{\left(n\log{2}\right)} +|\M|+1\right)\frac{2}{n}}
$$

So with probability at least 1-1/n we have:
$$
\sum_{i=1}^{|\M|}\left|\frac{\hat{n}_{i}}{n}-p_{i}\right| \leq \sqrt{\left(\alpha \log{\left(n\log{2}\right)} +|\M|+1\right)\frac{2}{n}}
$$
By using Lemma \ref{lemma:L1_BO_bound} we get the result.
\end{proof}

\subsection{The History Dependent Simulation Lemma}\label{proof:LipCon}

\textbf{Lemma \ref{lemma:LipCont}}. For any history-dependent policy $\pi$ and any parametric mapping $g$ that satisfies Assumption \ref{assumption:StateCostAssumption}, the following holds for any $\theta_1,\theta_2 \in \Theta$:
\begin{equation*}
 \left| L_{M=g(\theta_1), \pi}-L_{M=g(\theta_2), \pi} \right| \leq C_{max}C_g\norm{\theta_1-\theta_2}_1 \cdot T^2.
\end{equation*}

\begin{proof} [Proof of Lemma \ref{lemma:LipCont}] 
For ease of notation, our proof is for the case of discrete state and action spaces and discrete range of the cost function. Yet, this proof can easily be extended to the more general continuous case by replacing the sums with integrals. 

The history at step t:
$$
h_t = \left\{s_{0}, a_{0}, c_{0}, s_{1}, a_{1}, c_{1} \ldots, s_{t}\right\}
$$

The cost distribution at step t for a deterministic, history-dependent policy and mdp $M$: 
$$
C_M^{\pi}\left(h_{t}\right) := C_M\left(c_{t} \mid s_{t}, \pi(h_{t})\right)
$$
And the average cost:
$$
\bar{C}_M^{\pi}\left(h_{t}\right) := \sum_{c_t} c_t\cdot C_M\left(c_{t} \mid s_{t}, \pi(h_{t})\right)
$$

The value function:
$$
V_{t, M}^{\pi}\left(h_{t}\right)=\mathbb{E}_{\pi , M}\left[\sum_{t^{\prime}=t}^{T}C_M^{\pi}\left(h_{t'}\right) \mid h_{t}\right]
$$

The RL loss:
$$
 L_{M, \pi} = \mu^T V_{0, M}^{\pi} =   \mathbb{E}_{\pi, M}\left[\sum_{t=0}^{T-1} C_M^{\pi}\left(h_{t}\right)\right]
$$

Where $\mu$ is the initial state distribution.

For $t=T-1$:
$$
V_{T-1, M}^{\pi}\left(h_{T-1}\right) = \bar{C}_M^{\pi}\left(h_{T-1}\right)
$$

For $t<T-1$:
$$
V_{t, M}^{\pi}\left(h_{t}\right)=\bar{C}_M^{\pi}\left(h_{t}\right)+\sum_{c_t, s_{t+1}} P_M\left(c_t, s_{t+1} \mid s_{t}, \pi(h_{t})\right) V_{t+1, M}^{\pi}\left(\left\{h_{t}, \pi(h_{t}), c_t, s_{t+1}\right\}\right)
$$

For two MDPs $M$ and $M'$ such that $\forall s \in \s$ and $\forall a \in \A$: 
$$\sum_{c, s'} \left|P_M\left(c, s' \mid s, a\right) - P_{M'}\left(c, s' \mid s, a\right) \right|=\epsilon$$

We have:

\begin{align*}
\bar{C}_M^{\pi}\left(h_{t}\right)-\bar{C}_{M'}^{\pi}\left(h_{t}\right) =& \sum_{c_t} c_t\cdot \left(P_M\left(c_{t} \mid s_{t}, \pi(h_{t})\right)-P_{M'}\left(c_{t} \mid s_{t}, \pi(h_{t})\right)\right)\\
\left|\bar{C}_M^{\pi}\left(h_{t}\right)-\bar{C}_{M'}^{\pi}\left(h_{t}\right)\right| \leq& C_{max}\cdot \sum_{c_t} \left|P_M\left(c_{t} \mid s_{t}, \pi(h_{t})\right)-P_{M'}\left(c_{t} \mid s_{t}, \pi(h_{t})\right)\right| \leq C_{max}\epsilon
\end{align*}

For ease of notation we define $h_{t+1} = \left\{h_{t}, \pi(h_{t}), c_t, s_{t+1}\right\}$:
\begin{align*}
V_{t, M}^{\pi}\left(h_{t}\right)-V_{t, M'}^{\pi}\left(h_{t}\right)&=\bar{C}_M^{\pi}\left(h_{t}\right)+\sum_{c_t, s_{t+1}} P_M\left( c_t, s_{t+1} \mid s_{t}, \pi(h_{t})\right) V_{t+1, M}^{\pi}\left(h_{t+1}\right)\\
&-\bar{C}_{M'}^{\pi}\left(h_{t}\right)-\sum_{c_t, s_{t+1}} P_{M'}\left(c_t, s_{t+1} \mid s_{t}, \pi(h_{t})\right) V_{t+1, M'}^{\pi}\left(h_{t+1}\right)
\end{align*}

\begin{align*}  
\left|V_{t, M}^{\pi}\left(h_{t}\right)-V_{t, M'}^{\pi}\left(h_{t}\right)\right|\leq \mid C_M^{\pi}&\left(h_{t}\right)-C_{M'}^{\pi}\left(h_{t}\right)\mid+\\ +\sum_{c_{t}, s_{t+1}}\mid &P_M\left(c_t, s_{t+1} \mid s_{t}, \pi(h_{t})\right) V_{t+1, M}^{\pi}\left(h_{t+1}\right)\\
- &P_{M}\left(c_t, s_{t+1} \mid s_{t}, \pi(h_{t})\right) V_{t+1, M'}^{\pi}\left(h_{t+1}\right)\\
+ &P_{M}\left(c_t, s_{t+1} \mid s_{t}, \pi(h_{t})\right) V_{t+1, M'}^{\pi}\left(h_{t+1}\right)\\
- &P_{M'}\left(c_t, s_{t+1} \mid s_{t}, \pi(h_{t})\right) V_{t+1, M'}^{\pi}\left(h_{t+1}\right)\mid
\end{align*}
So:
\begin{align*}
\mid V_{t, M}^{\pi}\left(h_{t}\right)-V_{t, M'}^{\pi}&\left(h_{t}\right)\mid \leq C_{max}\epsilon + \\ &+\sum_{c_t, s_{t+1}} P_M\left(c_t, s_{t+1} \mid s_{t}, \pi(h_{t})\right) \cdot \left| V_{t+1, M}^{\pi}\left(h_{t+1}\right)- V_{t+1, M'}^{\pi}\left(h_{t+1}\right)\right|\\
&+V_{t+1, M'}^{\pi}\left(h_{t+1}\right)\cdot \left|P_M\left(c_t, s_{t+1} \mid s_{t}, \pi(h_{t})\right)-P_{M'}\left(c_t, s_{t+1} \mid s_{t}, \pi(h_{t})\right)\right|
\end{align*}


We know that $V^{\pi}_{t+1,M}(h_{t+1}) \leq \left(T-1\right) C_{max}$, so:
\begin{align*}
\left|V_{t, M}^{\pi}\left(h_{t}\right)-V_{t, M'}^{\pi}\left(h_{t}\right)\right|&\leq C_{max}\epsilon +\norm{V_{t+1, M}^{\pi}-V_{t+1, M'}^{\pi}}_\infty + \left(T-1\right) C_{max} \epsilon \\&= TC_{max}\epsilon+ \norm{V_{t+1, M}^{\pi}-V_{t+1, M'}^{\pi}}_\infty
\end{align*} 


Applying the above rule recurrently from $t=0$ to $t=T-2$: 
\begin{align*}
\left|V_{0, M}^{\pi}\left(h_{0}\right)-V_{0, M'}^{\pi}\left(h_{0}\right)\right|\leq C_{max}\epsilon T^2
\end{align*}

Plugging the result for the loss difference:
$$
 \left| L_{M, \pi}-L_{M', \pi} \right| = \mu^T \left|V_{0, M}^{\pi}-V_{0, M'}^{\pi} \right|\leq C_{max}\epsilon T^2
$$
We receive the result by using assumption \ref{assumption:StateCostAssumption}.
\end{proof}

\subsection{PCA Error Bounds for Parametric Spaces with Low Dimensional Structure}\label{proof:decayingeigen}

\textbf{Lemma \ref{lemma:decayingeigen}}. Under Assumptions \ref{assumption:subgauss} and \ref{DecayingEigen}, we have that:
\begin{center}
$
\mathbb{E} \left[R\left(\widehat{P}_{d'}\right)\right] \leq \min \left(\frac{8C_{sg}^2 \sqrt{d'} tr(\Sigma)}{\sqrt{n}}, \frac{64 C_{sg}^4 tr^{2}(\Sigma)}{n\left(\lambda_{d'}-\lambda_{d'+1}\right)}\right) + \epsilon \cdot (d-d').
$
\end{center}

\begin{proof} [Proof of Lemma \ref{lemma:decayingeigen}]

A well known property of the PCA \cite{PCAbounds}:
$$
\min _{P \in \mathcal{P}_{d'}} R(P) = \sum_{i=d'+1}^d \lambda_i
$$
So:
\begin{equation*}
\begin{split}
\mathbb{E} R\left(\widehat{P}_{d'}\right) &\leq \left(\frac{8C_{sg}^2 \sqrt{d'} tr(\Sigma)}{\sqrt{n}}, \frac{64 C_{sg}^4 tr^{2}(\Sigma)}{n\left(\lambda_{d'}-\lambda_{d'+1}\right)}\right) + \sum_{i=d'+1}^d \lambda_i \\ &\leq \left(\frac{8C_{sg}^2 \sqrt{d'} tr(\Sigma)}{\sqrt{n}}, \frac{64 C_{sg}^4 tr^{2}(\Sigma)}{n\left(\lambda_{d'}-\lambda_{d'+1}\right)}\right) + \epsilon \cdot \left(d-d'\right)
\end{split}
\end{equation*}

Where we used Theorem \ref{theorem:PCABound} for the first inequality and assumption \ref{DecayingEigen} for the second.

\end{proof}

\subsection{Regret Bounds by Dimensionality Reduction} \label{proof:PCAKDEBounds}

\textbf{Theorem \ref{PCAKDEBounds}}. Let $\widehat{f}_{G}^{d'}$ be the approximation of $f$ as defined in Section \ref{sec:PCA}. Under Assumptions \ref{less than inf}, \ref{assumption:subgauss}, \ref{DecayingEigen}, and \ref{assumption:LowDimHolder} we have with probability at least $1-1/n$:

\begin{small}
$
\mathcal{R}_T\left(\pi^*_{\widehat{f}_{G}^{d'}}\right) \leq 2C_{max}T\left|\Theta_L\right| C_{d'} \left(\frac{\log{n}}{n}\right)^{\frac{\alpha'}{2\alpha'+d'}} + 2C_{max}T^2 C_g \sqrt{\min \left(\frac{C'_{sg} \sqrt{d'}}{\sqrt{n}}, \frac{C_{sg}^{\prime 2} }{n\Delta_{\lambda, d'}}\right) + \epsilon  (d-d')},
$

\end{small}
where 
\begin{small}
$C_{d'} = C_{\alpha'}2^{\frac{\alpha'-1}{2}} + \frac{16d' \sqrt{C_{\alpha'}\Delta_{max}^{\alpha'}\left(\Theta_L\right) +\frac{1}{\left| \Theta_L \right|}}}{\sqrt{2}\left(2\pi\right)^{\frac{d'}{4}}}+\frac{64d'^2}{(2\pi)^{\frac{d'}{2}}}$, $C'_{sg}=8C_{sg}^2 tr(\Sigma)$ and $\Delta_{\lambda, d'} = \lambda_{d'}-\lambda_{d'+1}$
\end{small}

\begin{proof} [Proof of Theorem \ref{PCAKDEBounds}]
\begin{equation*}
\begin{split}
     \mathcal{L}_{f_{\theta}}(\pi)&=
     \mathbb{E}_{\theta \sim f_{\theta}} L_{\theta, \pi}  =
    \int L_{\theta, \pi} f_{\theta}(\theta) d\theta =
    \int \left( L_{\theta, \pi}-L_{\widehat{P}_{d'}\cdot \theta, \pi}+L_{\widehat{P}_{d'}\cdot \theta, \pi}\right)f_{\theta}(\theta) d\theta \\&=
    \int \left( L_{\theta, \pi}-L_{\widehat{P}_{d'}\cdot \theta, \pi}\right)f_{\theta}(\theta) d\theta +\int L_{\widehat{P}_{d'}\cdot \theta, \pi} f_{\theta}(\theta) d\theta \\&=
    \int \left( L_{\theta, \pi}-L_{\widehat{P}_{d'}\cdot \theta, \pi}\right)f_{\theta}(\theta) d\theta +\int L_{\widehat{P}_{d'}\cdot \theta, \pi} f_{\theta}(\theta) d\theta  \\&\stackrel{(*)}{=}
    \int \left( L_{\theta, \pi}-L_{\widehat{P}_{d'}\cdot \theta, \pi}\right)f_{\theta}(\theta) d\theta +\int_{\theta_L\in \Theta_L}\left(\int_{\theta_L^{\perp} \in \Theta_L^{\perp}(\theta_L)} L_{\widehat{P}_{d'}\cdot \theta_L^{\perp}, \pi}f_\theta(\theta_L^{\perp})d\theta_L^{\perp}\right) d\theta_L  \\&\stackrel{(**)}{=}
    \int \left( L_{\theta, \pi}-L_{\widehat{P}_{d'}\cdot \theta, \pi}\right)f_{\theta}(\theta) d\theta +\int_{\theta_L\in \Theta_L}\left(L_{W_L^T\cdot \theta_L, \pi}\int_{\theta_L^{\perp} \in \Theta_L^{\perp}(\theta_L)} f_\theta(\theta_L^{\perp})d\theta_L^{\perp}\right) d\theta_L  \\&=
    \int \left( L_{\theta, \pi}-L_{\widehat{P}_{d'}\cdot \theta, \pi}\right)f_{\theta}(\theta) d\theta +\int L_{W_L^T\cdot \theta_L, \pi} f_{\theta_L}(\theta_L) d\theta_L 
\end{split}
\end{equation*}

$(*)$ Fubini theorem holds since $\int L_{\widehat{P}_{d'}\cdot \theta, \pi} f_{\theta}(\theta) d\theta \leq \infty$

$(**)$ Since $W_L^T \cdot \theta_L = \widehat{P}_{d'}\cdot \theta_L^{\perp}$, $\forall \theta_L^{\perp} \in \Theta_L^{\perp}(\theta_L)$.

And we know that:

\begin{equation*}
\begin{split}
\mathcal{L}_{\widehat{f}_{G}^{d'}}(\pi) = \mathbb{E}_{\theta \sim \widehat{f}_{G}^{d'}}L_{\theta, \pi}  = \int L_{W_L^T\cdot \theta_L, \pi} \widehat{f}_G(\theta_L) d\theta_L
\end{split}
\end{equation*}

Subtracting the two and taking the absolute value and using the triangle inequality:
\begin{equation*}
\begin{split}
\left|\mathcal{L}_{f_{\theta}}(\pi)-\mathcal{L}_{\widehat{f}_{G}^{d'}}(\pi) \right| &\leq \int \left| L_{\theta, \pi}-L_{\widehat{P}_{d'}\cdot \theta, \pi}\right| f_{\theta}(\theta) d\theta +\int L_{W_L^T\cdot \theta_L, \pi} \left | f_{\theta_L}(\theta_L)-\widehat{f}_G(\theta_L)\right| d\theta_L
\end{split}
\end{equation*}

Starting with the first term:
\begin{equation*}
\begin{split}
\int \left| L_{\theta, \pi}-L_{\widehat{P}_{d'}\cdot \theta, \pi}\right| f_{\theta}(\theta) d\theta &\stackrel{(*)}{=}
C_{max}C_g T^2 \cdot \int \left| \theta - \widehat{P}_{d'}\cdot \theta\right| f_{\theta}(\theta) d\theta \\&\stackrel{(**)}{=} C_{max}C_g T^2 \cdot \sqrt{\int \left( \theta - \widehat{P}_{d'}\cdot \theta\right)^2 f_{\theta}(\theta) d\theta}\\
&= C_{max}C_g T^2 \cdot \sqrt{\mathbb{E} R\left(\widehat{P}_{d'}\right)} \\
&\stackrel{(***)}{\leq} \sqrt{\min \left(\frac{8C_{sg}^2 \sqrt{d'} tr(\Sigma)}{\sqrt{n}}, \frac{64 C_{sg}^4 tr^{2}(\Sigma)}{n\left(\lambda_{d'}-\lambda_{d'+1}\right)}\right) + \epsilon \cdot (d-d')}
\end{split}
\end{equation*}

$(*)$ Using Lemma \ref{lemma:LipCont}

$(**)$ Using the fact that $\sqrt{x}$ is concave, and using Jensen inequality ($\mathbb{E}[\sqrt{x}]\leq \sqrt{\mathbb{E}[x]}$):

$(***)$ Using Lemma \ref{lemma:decayingeigen}

The second term can be bounded using lemma $\ref{lemma:gaussianKDEbounds}$:
\begin{equation*}
\begin{split}
\int L_{W_L^T\cdot \theta_L, \pi} \left | f_{\theta_L}(\theta_L)-\widehat{f}_G(\theta_L)\right| d\theta_L \leq \left|\Theta_L\right|  C_{d'} C_{max}T \left(\frac{\log{n}}{n}\right)^{\frac{\alpha}{2\alpha+d'}}
\end{split}
\end{equation*}

Adding the two terms and using the same argument as in the proof from Section \ref{proof:L1_BO_bound} gives the result. 
\end{proof}




\newpage

\subsection{VariBad Dream Implementation Details}\label{sec:implementation}
In our implementation (which is formulated in Algorithm \ref{alg:VariBadDream}), we first run the regular VariBad training scheme for $I_W$ warm-up iterations (because the dream environments at the very start of the training are uninformative). After the warm-up period, at each iteration we insert the last encoded latent vector from each of the real environments (i.e after $H$ steps) into a latent pool. Every $I_{KDE}$ iterations we updated the KDE estimation. At each iteration we sample $n_{dream}$ vectors from the KDE and pass one to each dream environment worker. Each dream environment will use this latent vector and the reward decoder to assign rewards. 
\begin{algorithm}
\caption{VariBad Dream}\label{alg:VariBadDream}
\begin{algorithmic}

\Require $\{M_i\}_{i=1}^N\in \M^N \text{  The training MDPs}$, \par
\hskip\algorithmicindent $R, D \in \mathbb{N}$ Number of real and dream agents respectively \par
\hskip\algorithmicindent $I_W, I_T \in \mathbb{N}$ Number of warmup and training iterations respectively \par
\hskip\algorithmicindent $I_{KDE}\in \mathbb{N}$ KDE update interval

\State $\textit{real\_workers} \gets \{real\_worker()\}_{i=1}^R$
\State $\textit{dream\_workers} \gets \{dream\_worker()\}_{i=1}^D$
\State $\textit{latents\_pool} \gets \{\}$
\For{$i \gets 0  \textit{  to } I_W$}
\State  $\textit{latents\_pool} \gets \textit{latents\_pool} \cup \textit{real\_workers.run\_episode()}$  \Comment{Warmup iterations}
\EndFor
\For{$i \gets 0  \textit{  to } I_T$}
\If{$i \mod I_{KDE} = 0 $}
\State $dream\_workers.kde \gets \textit{kde(latents\_pool)}$  
\State $\textit{latents\_pool} \gets \{\}$
\EndIf
\State  $\textit{latents\_pool} \gets \textit{latents\_pool} \cup \textit{real\_workers.run\_episode()}$  \Comment{Main iterations}
\State  $\textit{dream\_workers.run\_episode()}$ 
\State $\textit{VariBad.vae\_update()}$ \Comment{Original VAE update}
\State $\textit{VariBad.policy\_update()}$ \Comment{Original policy update}
\EndFor
\Function{$\textit{real\_worker.run\_episode}$}{}
  \State $posterior\_latents \gets \{\}$
  \For{$\textit{real\_worker in real\_workers}$}
    \State $real\_worker.mdp = random\_sample(\{M_i\}_{i=1}^N)$
    \State $\textit{VariBad\_rollout(real\_worker.mdp)}$ \Comment{Steps in the environment and buffers updates}
    \State $\textit{posterior\_latents} \gets \textit{posterior\_latents} \cup \textit{vae.posterior}$
  \EndFor
  \State $\textit{return posterior\_latents}$
\EndFunction
\Function{$\textit{dream\_worker.run\_episode}$}{}
\For{$\textit{dream\_worker in dream\_workers}$}
\State $\textit{dream\_worker.latent = dream\_workers.kde.sample()}$
\State $\textit{curr\_mdp} \gets vae.decoder(dream\_worker.latent)$  
\State \Comment{MDP's transitions and reward defined by the decoder's outputs}
\State $\textit{VariBad\_rollout(curr\_mdp)}$ \Comment{Original episode rollout}
\EndFor
\EndFunction

\end{algorithmic}
\end{algorithm}

Our implementation is based on the open-source code of Zintgraf et al \cite{VariBad}, which can be found in \url{https://github.com/lmzintgraf/varibad}.

The code implementing VariBad Dream,  and the details on how to reproduce all the experiments presented in this paper, can be found in \url{https://github.com/zoharri/MBRL2}.

Hyperparameters for VariBad:
\begin{center}
	\begin{tabular}{ |l|l| } 
		\hline
		Rollout horizon & 100 \\
		Number of rollouts & 2 \\
		\hline
		RL algorithm & PPO \\ 
		Epochs & 2 \\ 
		Minibatches & 4 \\ 
		Max grad norm & 0.5 \\ 
		Clip parameter & 0.05  \\ 
		Value loss coeff. & 0.5 \\ 
		Entropy coeff. & 0.01  \\
		Gamma & 0.97 \\
		\hline
		Weight of KL term in ELBO & 1 \\ 
		Policy learning rate & 7e-4 \\ 
		VAE learning rate & 1e-3  \\
		VAE batch size & 5 \\ 
		Task embedding size & 5 \\ 
		\hline
		Policy architecture & 2 hidden layers, 128-dim each, TanH activations  \\ 
		\hline
		Encoder architecture & States/actions/rewards encoder: FC layer 32/16/16 dim, \\ & GRU with hidden size 128 , \\ & output layer of dim 5, ReLu activations \\
		\hline 
		Reward decoder architecture & 2 hidden layers, 64 and 32 dims, \\ & ReLu activations \\
		Reward decoder loss function & Mean squared error\\
		\hline
	\end{tabular}
\end{center}
The Hyper parameters for VariBad Dream are the same as for VariBad with the following additional parameters:
\begin{center}
	\begin{tabular}{ |l|l| } 
		\hline
		Number of warm-up iterations  & 5000  \\ 
		KDE update interval & 3 \\
		\hline
	\end{tabular}
\end{center}
For the case of 20/30 sample sizes we chose 4/6 dream environment workers and 12/10 real environment workers. The intuition is that as we have more training environments, the better the latent representation and KDE are, and we can rely more on the dream environments. 

Similarly to \cite{VariBad} and \cite{LDM}, we used only the reward decoder (and not the state decoder) due to better empirical results. 

\subsection{KDE and Mixup Dream Environments Comparison} \label{sec:mixupComp}
In this section we compare between using the proposed KDE and the simple Mixup approach to generate dream environments.
While Mixup approaches usually aim at solving out of distribution generalization, we use it here as a baseline for the prior estimation. We emphasize that we don't claim to solve OOD generalization.    
In the Mixup approach, given $N$ latent vectors of real environments, we sample $N$ coefficients from the Dirichlet distribution $\vec{\alpha} = Dir\left(1, \dots, 1\right)$ and use them to calculate a weighted average for the dream environment.

\begin{figure}
     \centering
     \vspace{-1em}
     \begin{subfigure}[l]{0.45\textwidth}
         \centering
         \includegraphics[width=\textwidth]{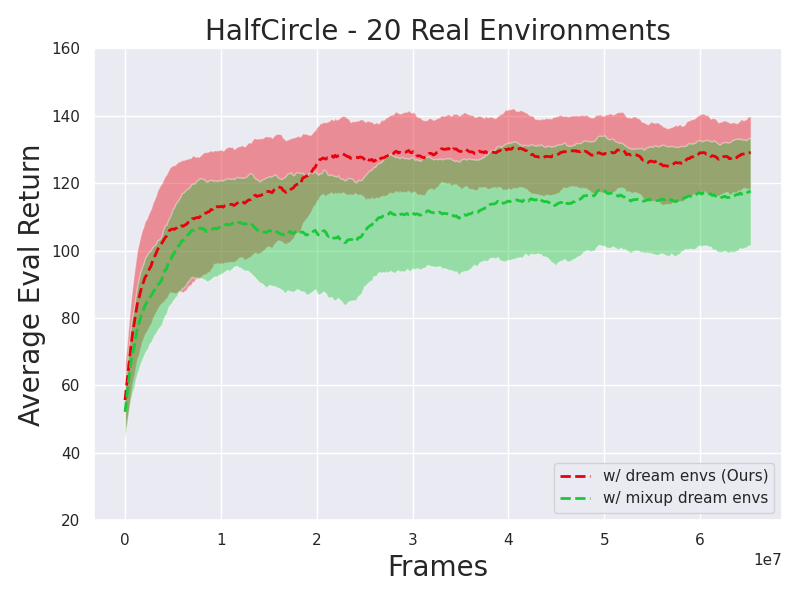}
     \end{subfigure}
     \hspace{-1em}
     \begin{subfigure}[r]{0.45\textwidth}
         \centering
         \includegraphics[width=\textwidth]{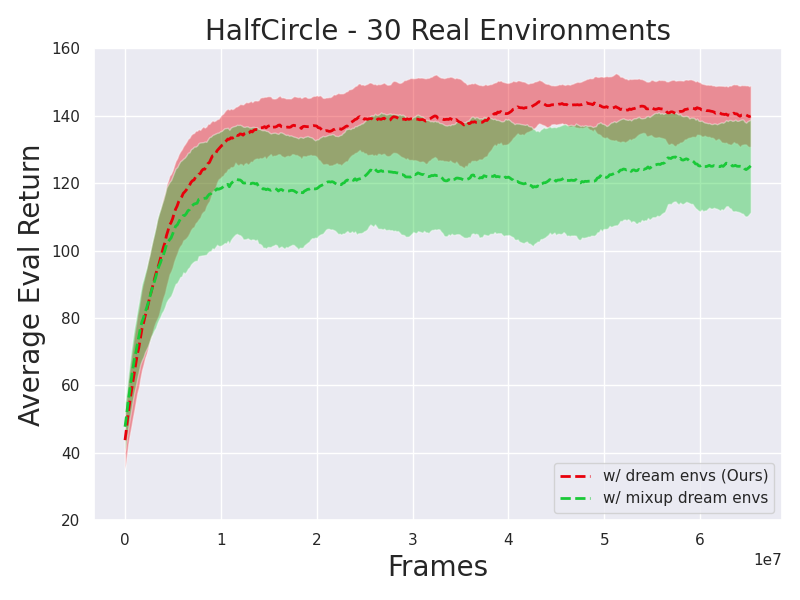}
     \end{subfigure}
     \vspace{-1em}
     \caption{Average return on HalfCircle with KDE and mixup dream environments. The average is shown in dashed lines, with the 95\% confidence intervals (15 random seeds).}
     \label{figure:HalfCircleMixupComp}
\end{figure}

In Figure \ref{figure:HalfCircleMixupComp} we compare the average evaluation return of the proposed KDE dream environments to the Mixup dream environments. Evidently, our approach achieved higher return, both for $N_{train}=20$ and $N_{train}=30$.

\begin{figure}
    \centering
    \setlength{\belowcaptionskip}{-6pt}
    \setlength{\tabcolsep}{1.5pt}
    {\small
    \begin{tabular}{c c  c c c c c }
        \vspace{-0.0615cm}
        \raisebox{0.18in}{\rotatebox{90}{\textbf{KDE}}} &
        \includegraphics[width=0.15\textwidth]{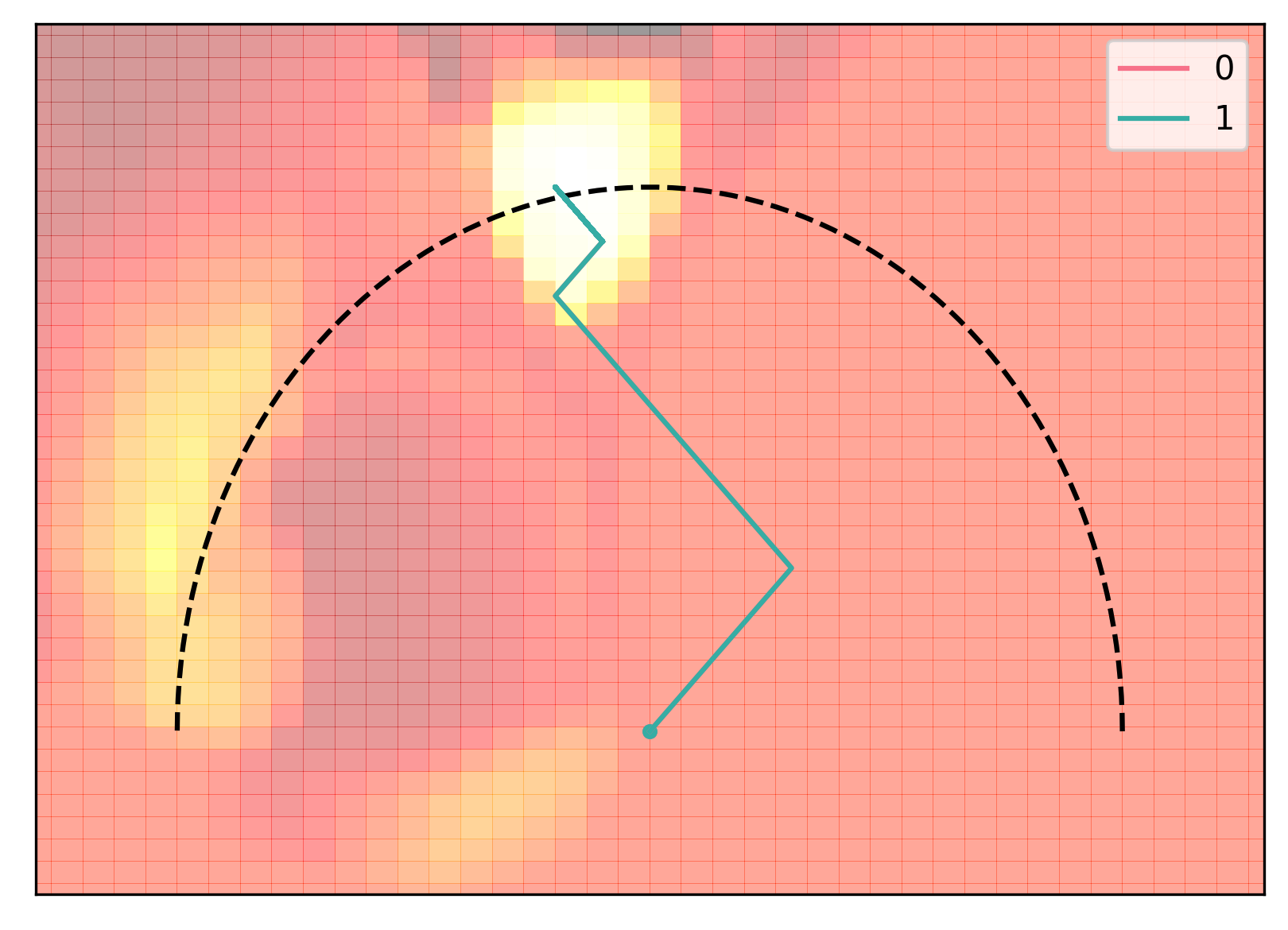} &
        \includegraphics[width=0.15\textwidth]{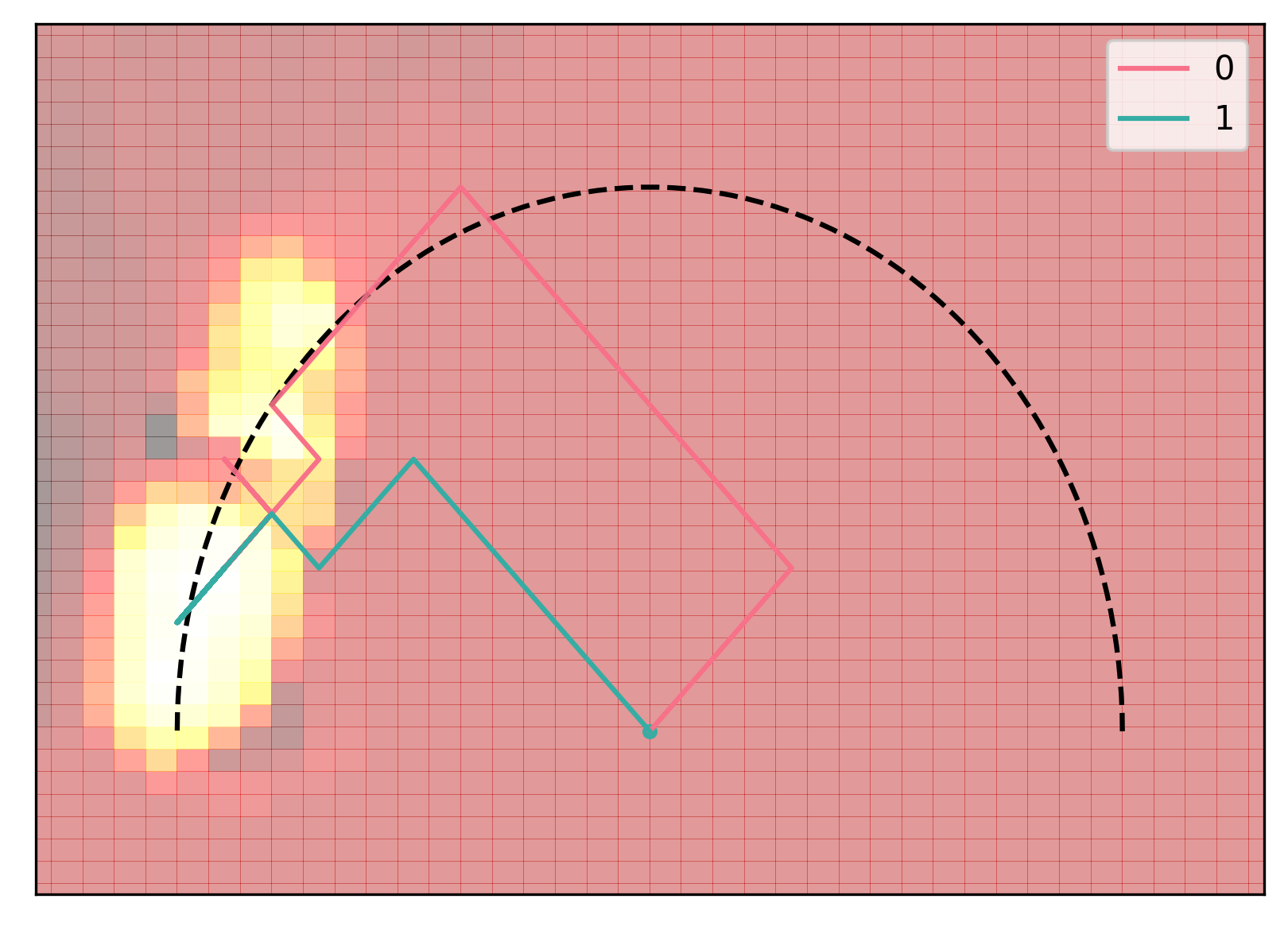} &
        \includegraphics[width=0.15\textwidth]{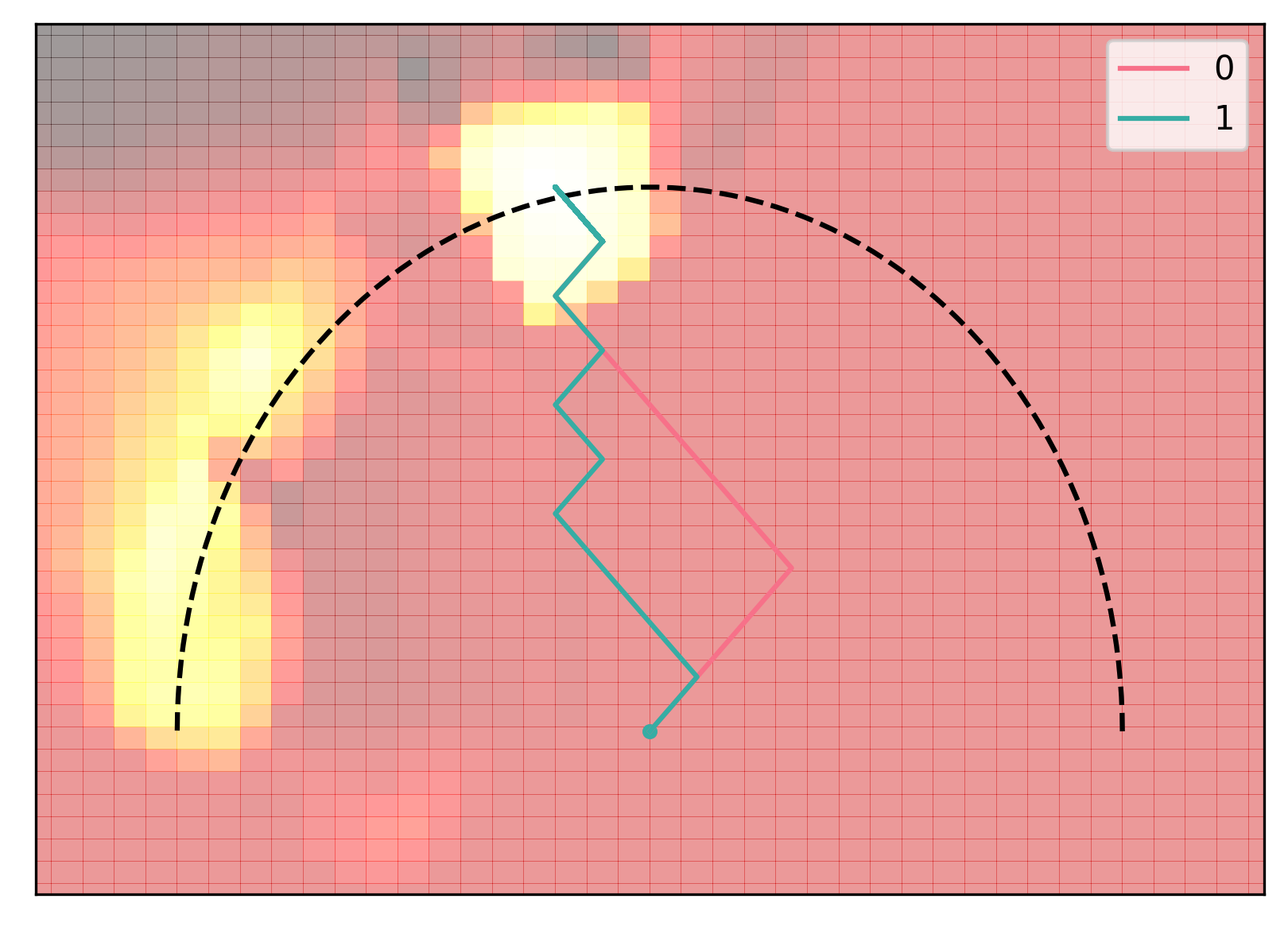} &
        \includegraphics[width=0.15\textwidth]{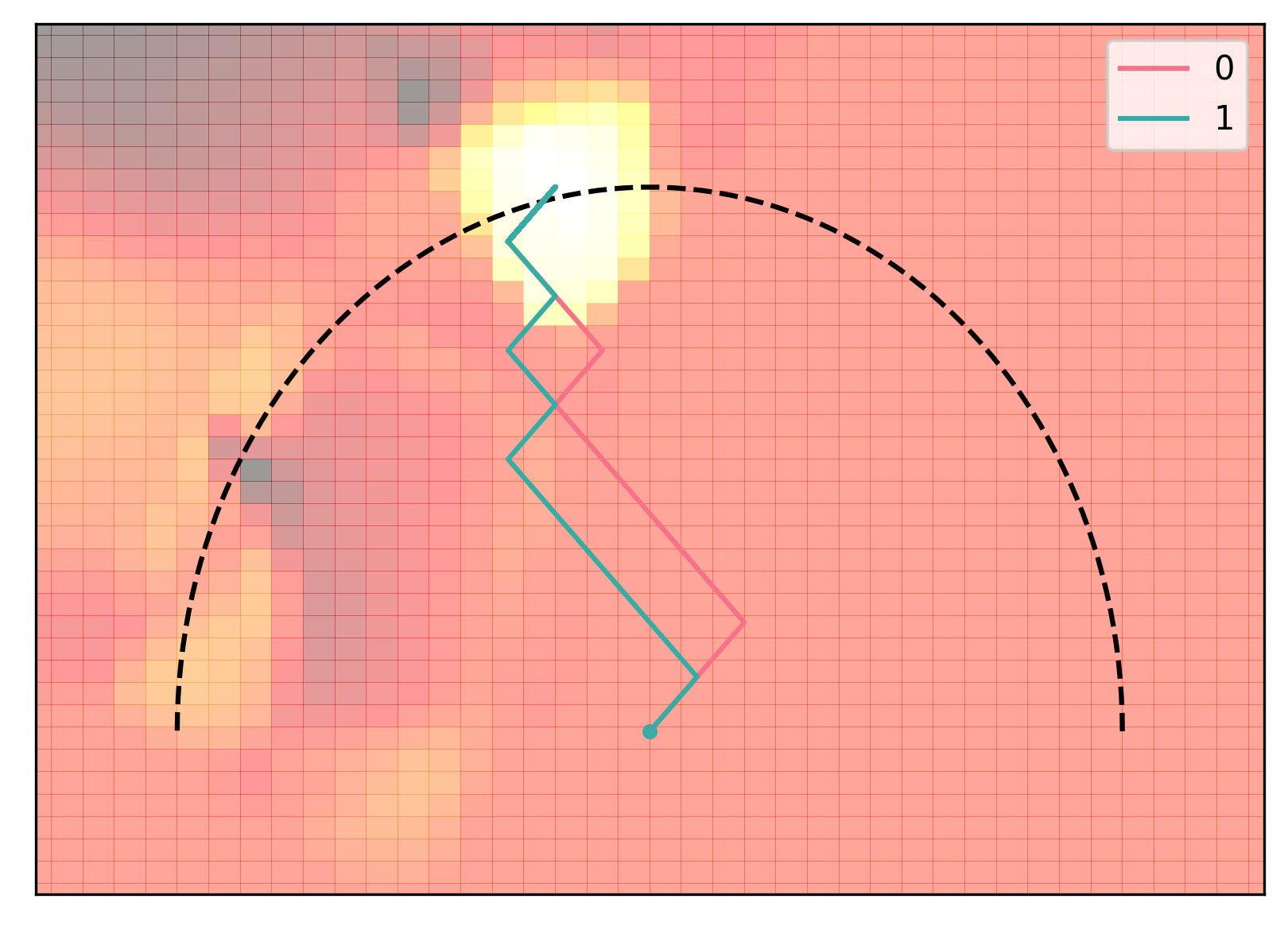} &
        \includegraphics[width=0.15\textwidth]{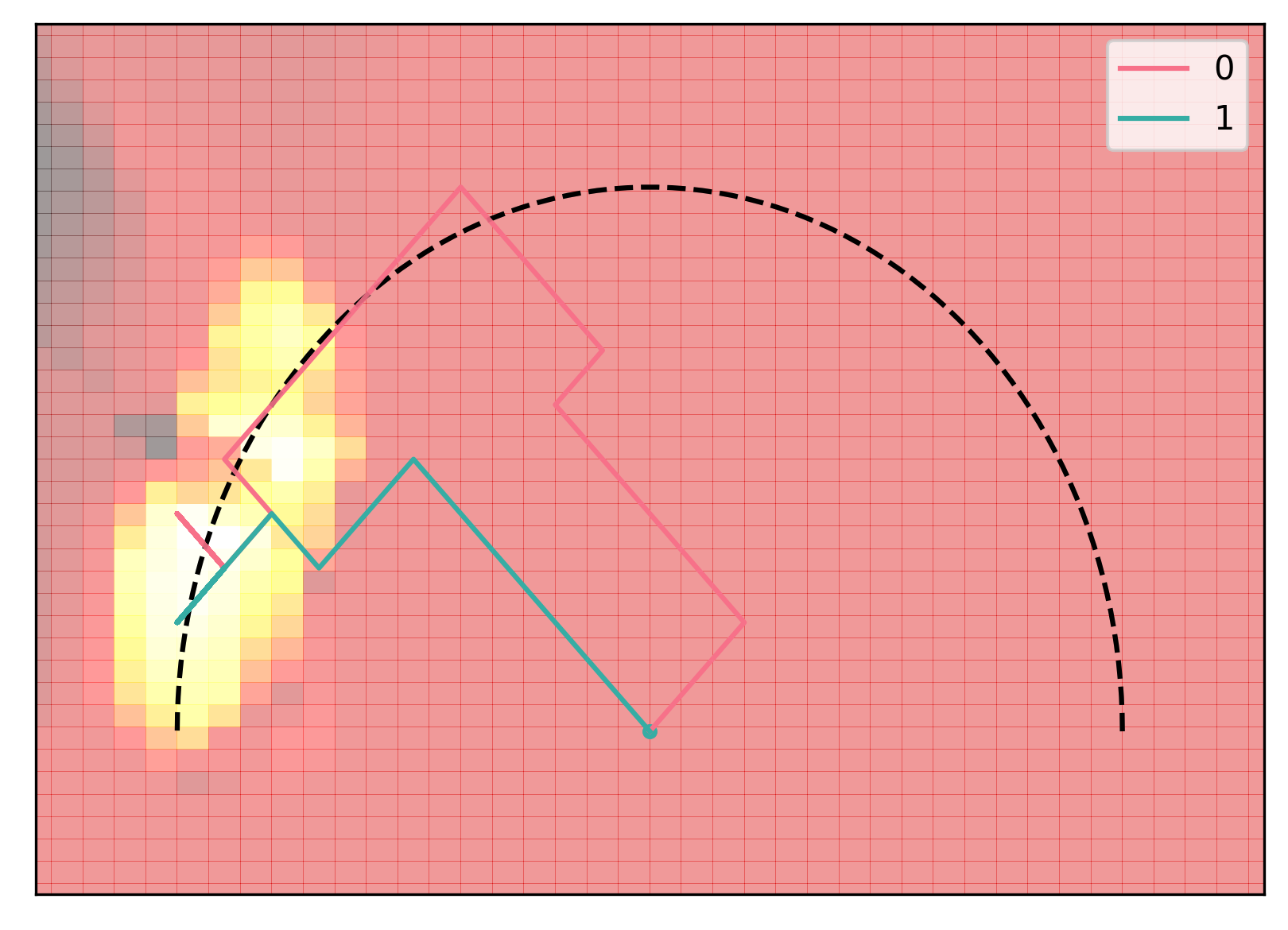} &
        \includegraphics[width=0.15\textwidth]{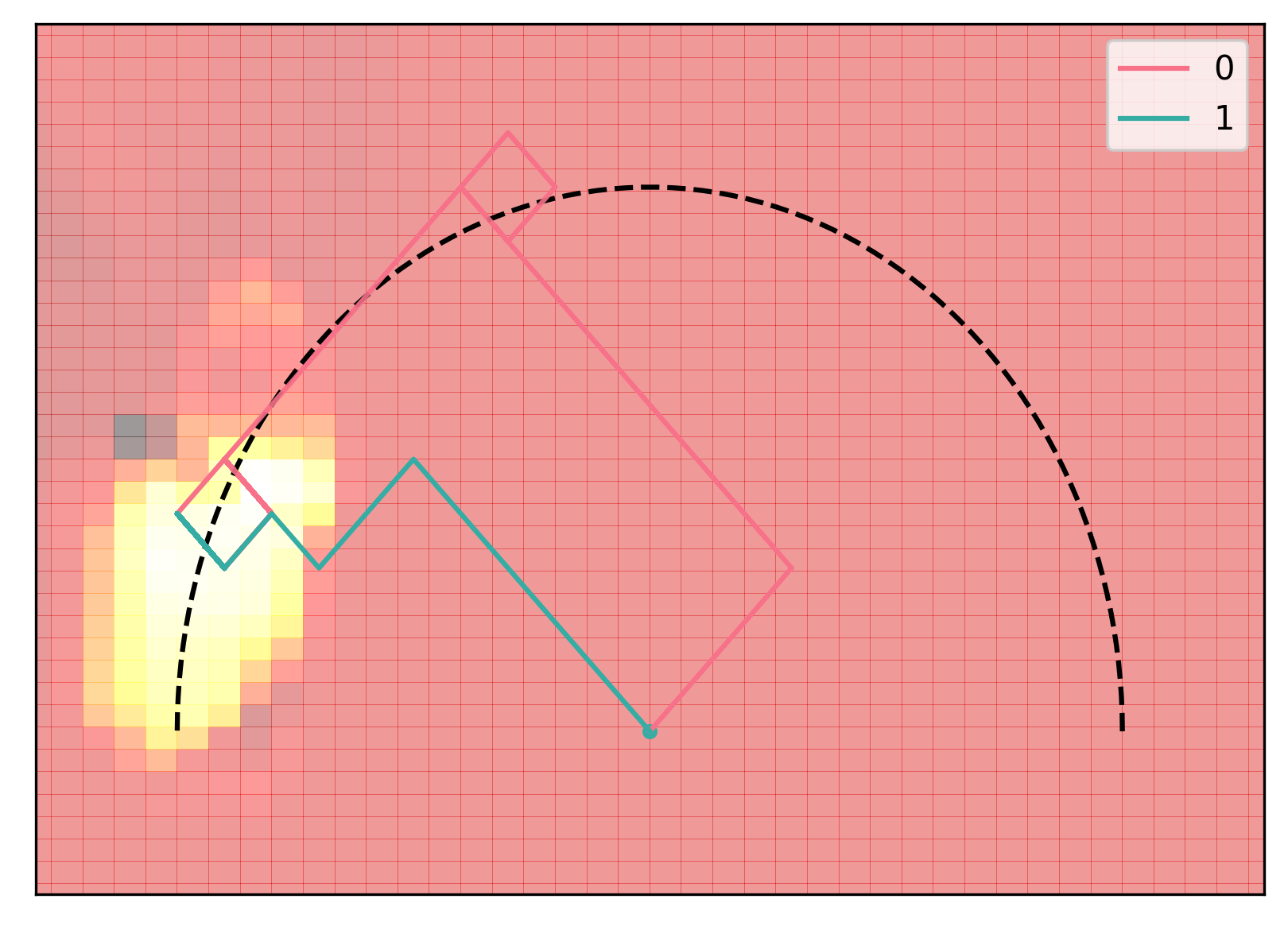}  \\

        \raisebox{0.13in}{\rotatebox{90}{\textbf{Mixup}}} &
        \includegraphics[width=0.15\textwidth]{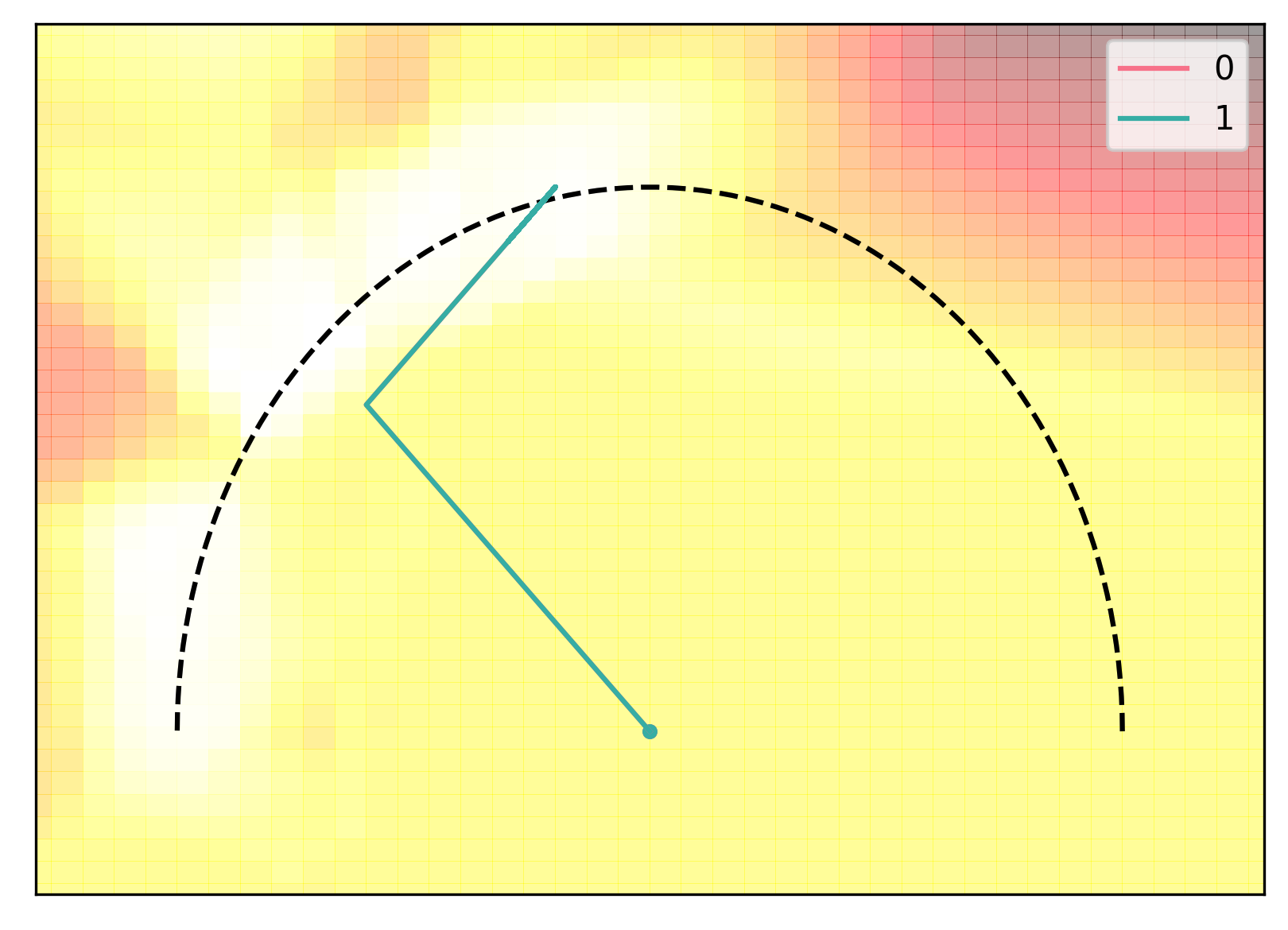} &
        \includegraphics[width=0.15\textwidth]{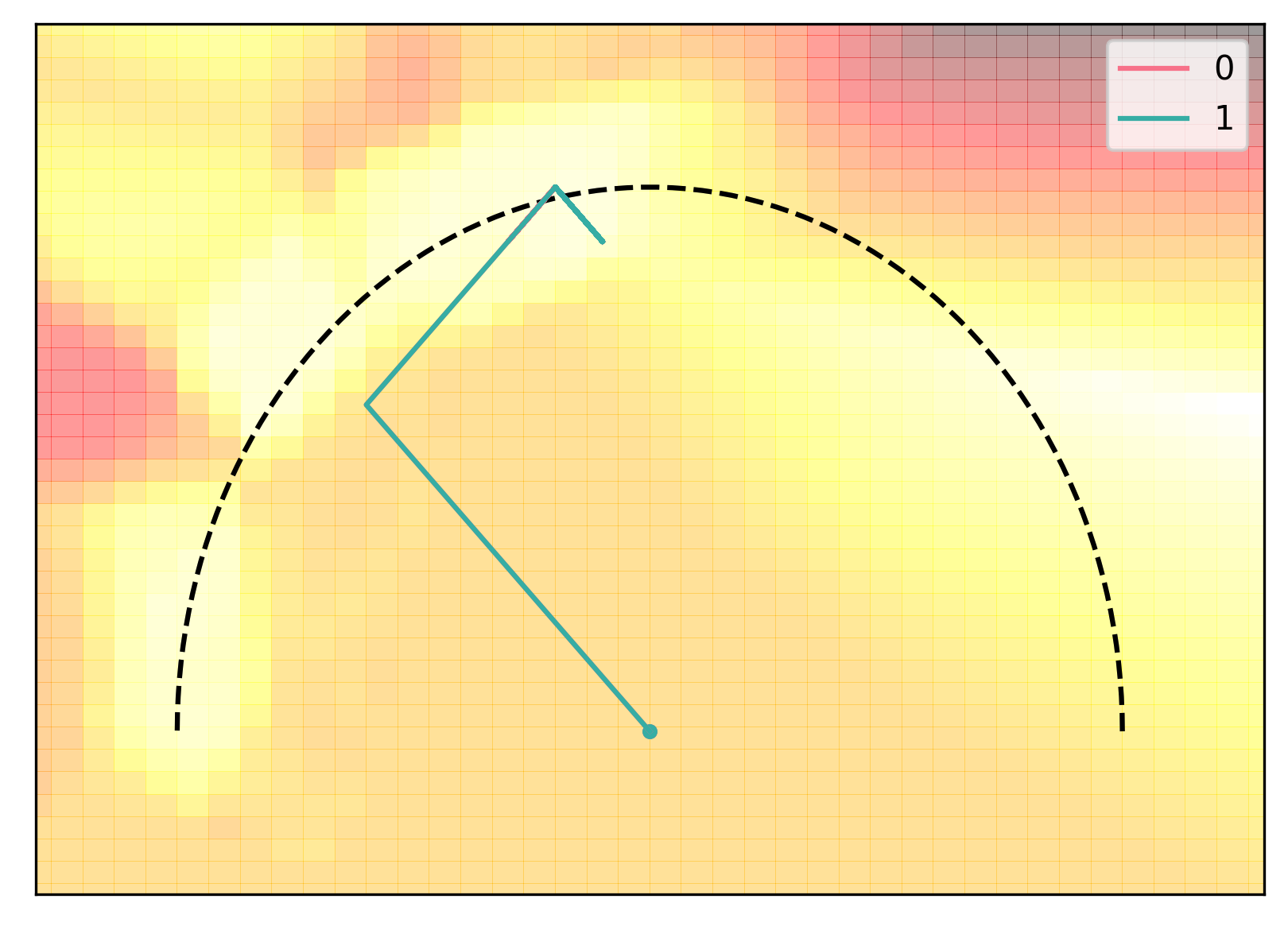} &
        \includegraphics[width=0.15\textwidth]{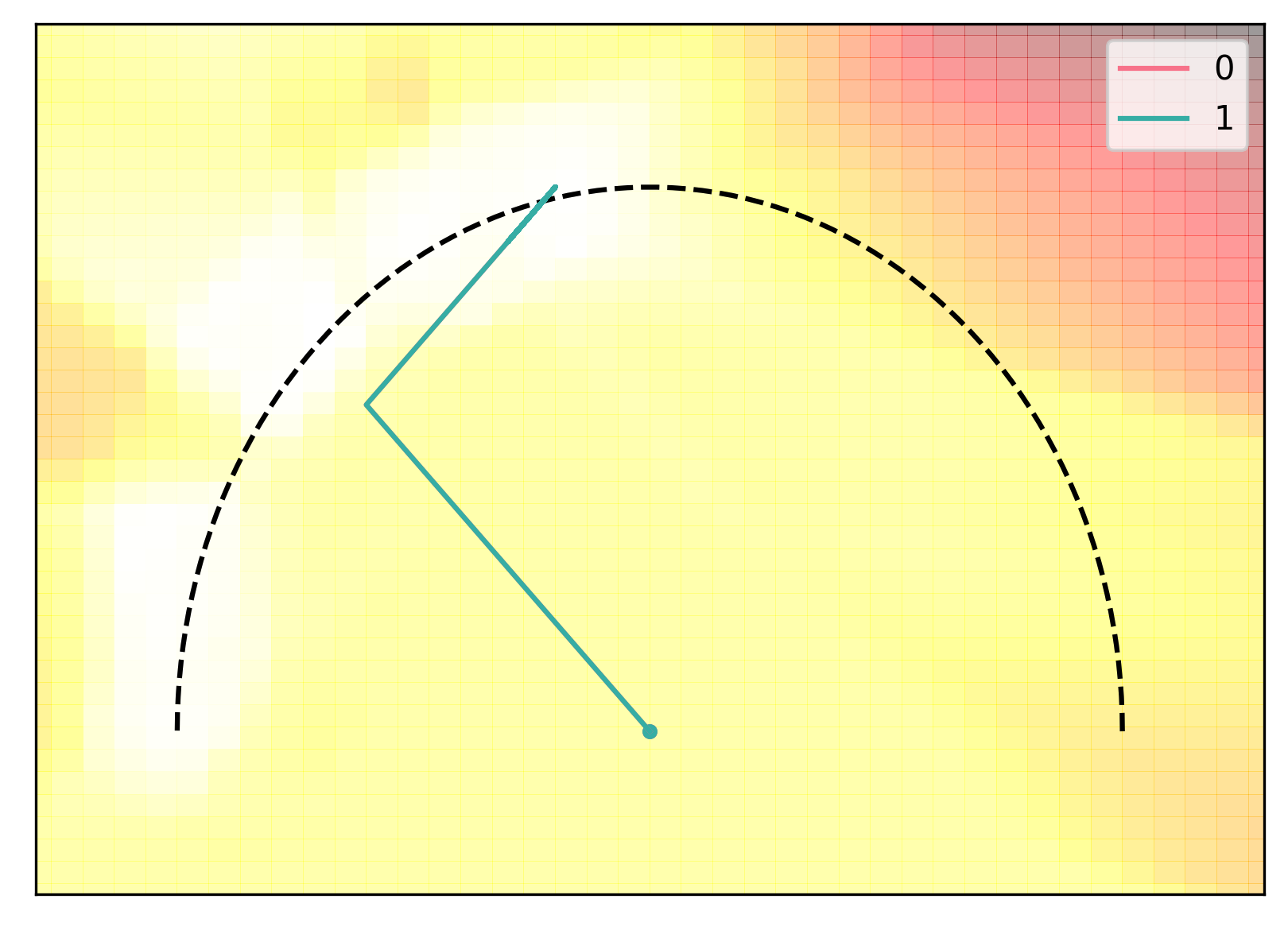} &
        \includegraphics[width=0.15\textwidth]{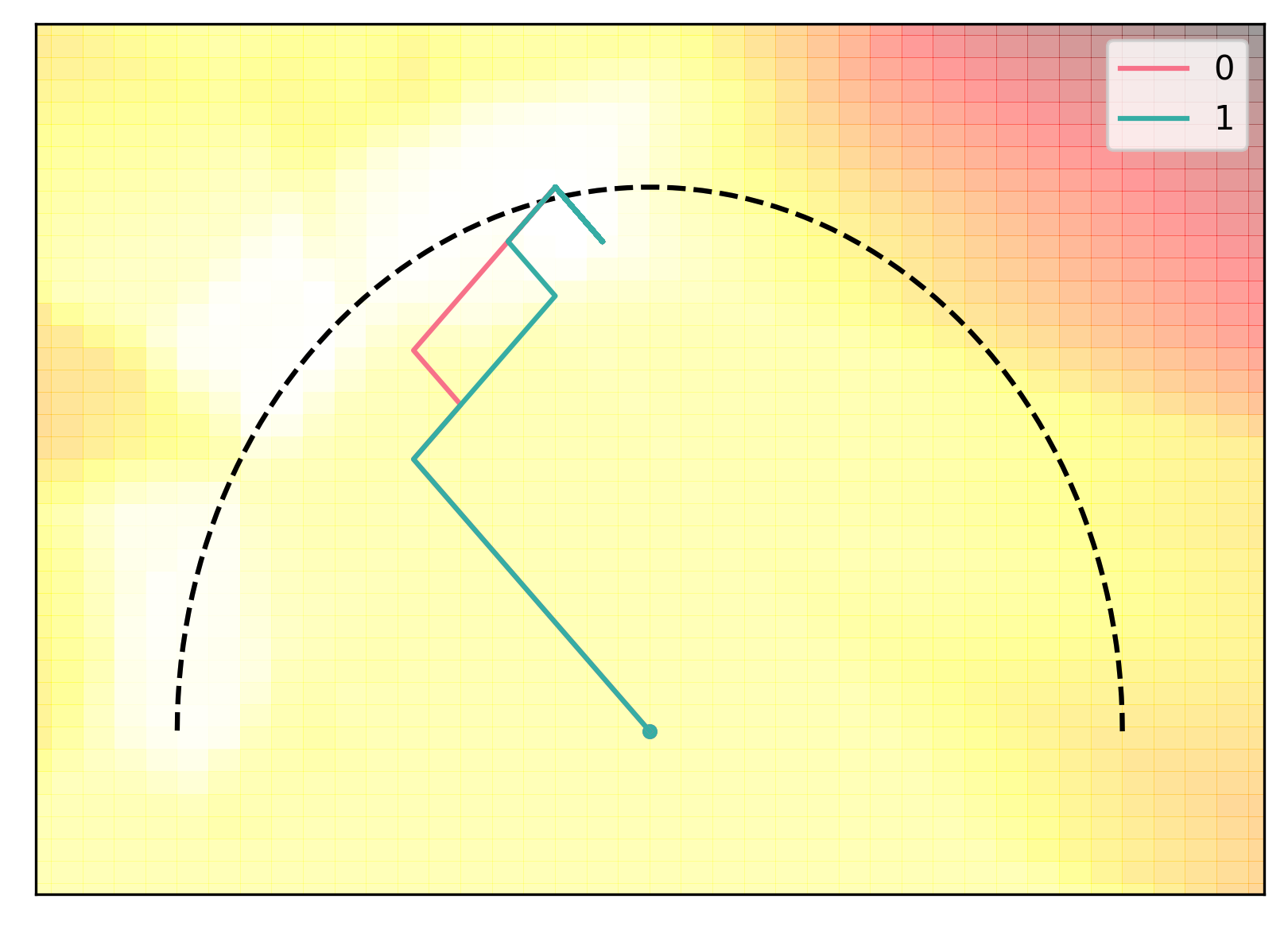} &
        \includegraphics[width=0.15\textwidth]{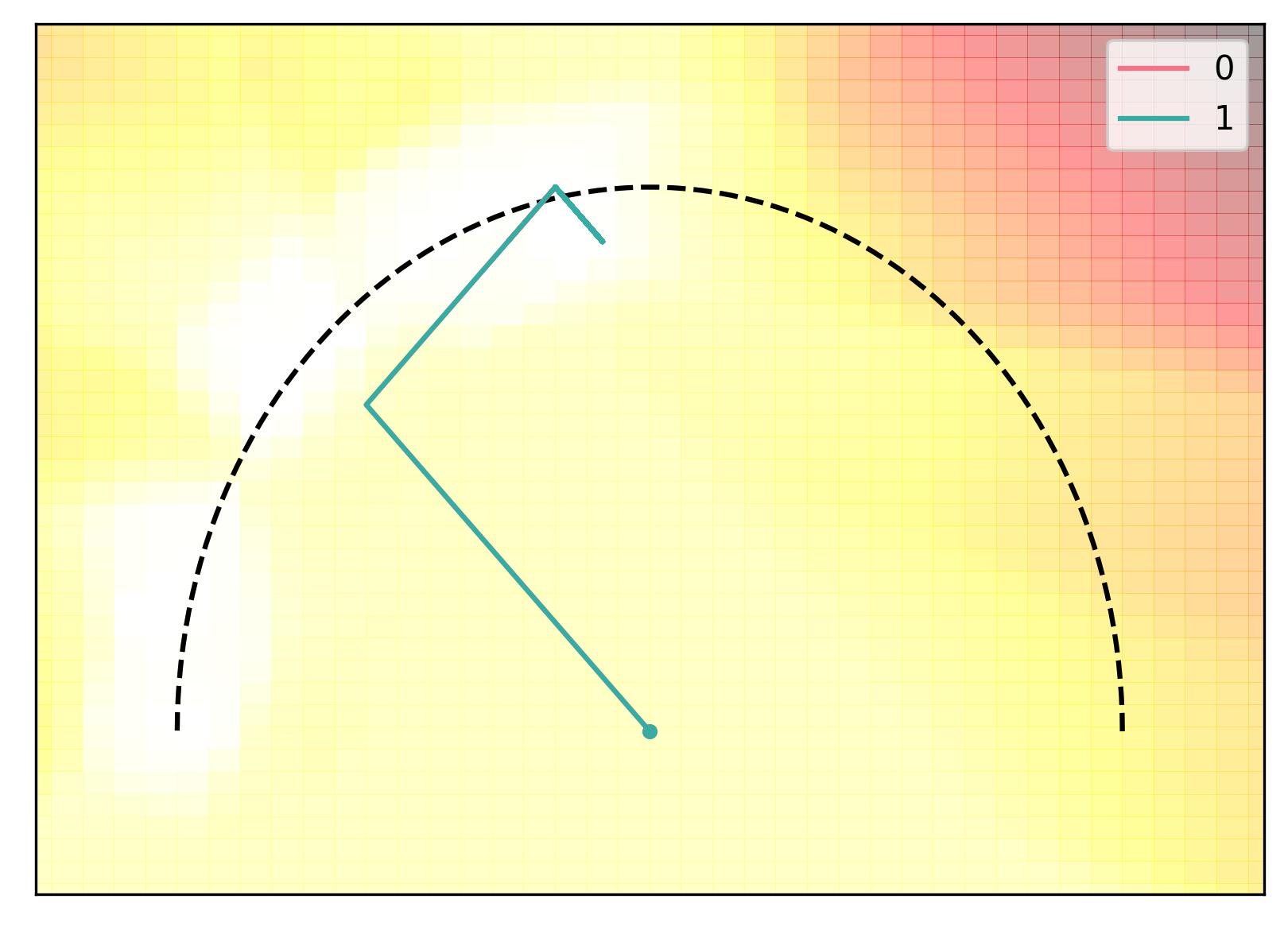} &
        \includegraphics[width=0.15\textwidth]{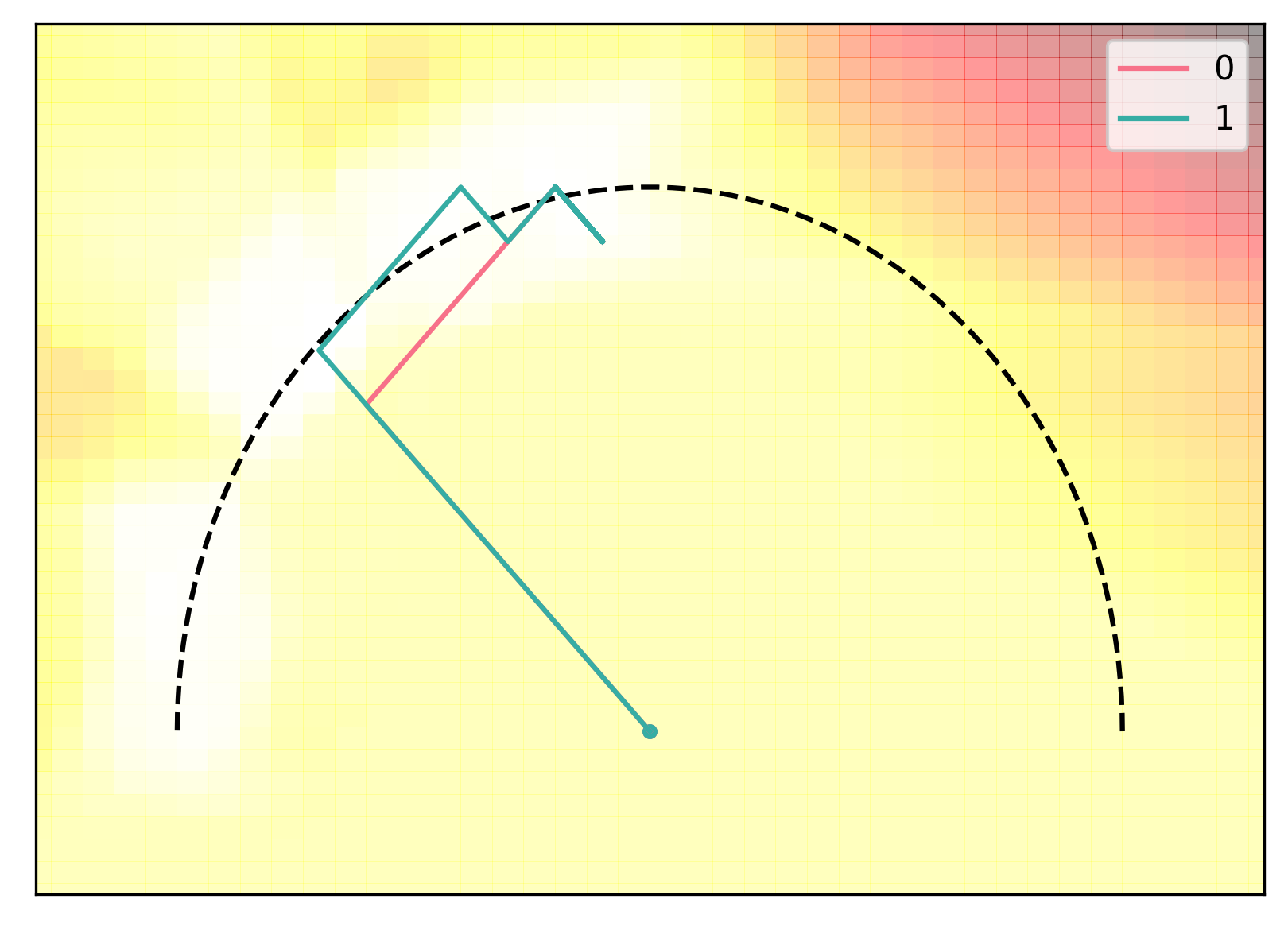}\\
        
    \end{tabular}
    
    }
    \vspace{-0.225cm}
    \caption{Comparison of dream environments' reward maps, generated during the training using 20 real environments. On the top row - our suggested KDE method. On the bottom row - the Mixup method. Each column corresponds to a different training iteration with a 1000 iteration interval between each one. The trajectory of the policy for the sampled dream environment is plotted on top of the reward map as well.}
    \vspace{-0.225cm}
    \label{fig:dreamvis}
\end{figure}

To further analyze the differences between the two approaches and to better understand the dream environments, we visualize the reward map of the dream environments, as can be seen in Figure \ref{fig:dreamvis}. In order to visualize the reward map for a given latent vector (sampled using either KDE or Mixup), we pass a discrete grid of states and the latent vector to the reward decoder and draw a heatmap of the results. In Figure \ref{fig:dreamvis} we plot multiple sampled latent vectors, which were observed during the training. We can see that the KDE produces much more realistic and variable dream environments than the Mixup approach, explaining its superior performances. 

\subsection{Ant Goal Environment} \label{sec:ant_goal_env}
\begin{figure}[H]
     \centering
     \vspace{-1em}
     \begin{subfigure}[l]{0.6\textwidth}
         \centering
         \includegraphics[width=\textwidth]{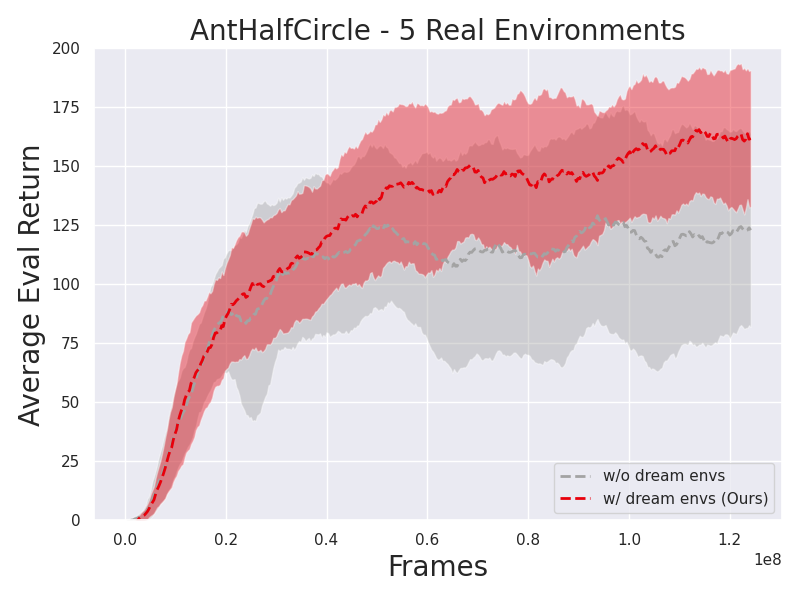}
     \end{subfigure}
     \vspace{-1em}
     \caption{ Average return on the Ant Goal environment with and without dream environments. The average is shown in dashed lines, with the 95\% confidence interval (8 random seeds).}
     \label{figure:AntGoal}
\end{figure}
Our PAC bounds showed that the determining factor in generalization is not the  dimensionality of the MDP (i.e., the dimension of the state and action spaces), but the dimensionality of the underlying MDP parameter space $\Theta$. 

In this section, we demonstrate this claim by running VariBad Dream on a high-dimensional continuous control problem -- a variant of the HalfCircle environment (Figure \ref{halfcircle}) with an Ant robot agent, simulated in MuJoCo \cite{todorov2012mujoco}. Note that while the space $\Theta$ is low dimensional, similarly to the point robot HalfCircle experiments, each MDP has a high dimensional state and action space.

In order to make the baseline VariBAD method work on this environment properly, we increased the goal size from 0.2 (which was used in the HalfCircle environment) to 0.3. We found that this change was necessary for VariBAD to explore effectively and reach the goal during training.

In Figure \ref{figure:AntGoal} we compare VariBAD and VariBAD Dream on this environment.
Similarly to the point robot results, we observe an advantage for VariBAD Dream when the number of training MDPs is small ($N_{train}=5$). For $N_{train}=10$ we did not observe significant advantage for VariBAD Dream, as, similarly to the $N_{train}=40$ case in point robot, the training samples already cover the space $\Theta$ adequately in most runs.

\subsection{Compute Specification}\label{sec:compute}
We ran the experiments on a Standard\_NC24s\_v3 Azure machine consisting of 4 NVIDIA Tesla V100 GPUs. Running 15 seeds at a time took a total of $\sim$24 hours.

\end{document}